\documentclass{article}

\usepackage[T1]{fontenc}
\usepackage{iclr2027_conference,times}
\usepackage{amsmath,amssymb,amsthm}
\usepackage{graphicx}
\usepackage{booktabs}
\usepackage{array}
\usepackage{multirow}
\usepackage{subcaption}
\usepackage{placeins}
\usepackage{wrapfig}
\usepackage{microtype}
\usepackage{xcolor}
\usepackage{xspace}
\definecolor{CitationBlue}{HTML}{2457A7}
\usepackage[colorlinks=true,citecolor=CitationBlue,linkcolor=black,urlcolor=black,pdfborder={0 0 0}]{hyperref}
\usepackage{url}

\DeclareMathOperator{\Var}{Var}
\DeclareMathOperator{\clip}{clip}

\newcommand{\method}{\textsc{AlphaAnything}\xspace}

\newsavebox{\AlphaTableBox}
\newcommand{\AlphaTableHead}[1]{%
  \begingroup\renewcommand{\arraystretch}{1}%
  \begin{tabular}[c]{@{}c@{}}#1\end{tabular}%
  \endgroup
}
\newcommand{\AlphaFitTable}[1]{%
  \sbox{\AlphaTableBox}{#1}%
  \ifdim\wd\AlphaTableBox>\linewidth
    \resizebox{\linewidth}{!}{\usebox{\AlphaTableBox}}%
  \else
    \usebox{\AlphaTableBox}%
  \fi
}

\newtheorem{proposition}{Proposition}

\usepackage{tcolorbox}
\definecolor{AlphaAbstractBackground}{HTML}{F2F3F5}

\usepackage[letterpaper,left=2.5cm,right=2.5cm,top=2.25cm,bottom=2.5cm]{geometry}
\newlength{\AlphaFigureWidth}
\hypersetup{pdftitle={AlphaAnything: Extending Pretrained RGB Generators to RGBA},
  pdfauthor={Qijie Wang, Zhixin Xu, Zhe Li, Haowei Zhu, Honghao Qiao, Bo Peng, Dawei Li, Bin Wang}}

\newcommand{\AlphaResourceLinks}{%
  {\small\color{black}\renewcommand{\arraystretch}{1.45}%
    \begin{tabular}{@{}c@{\hspace{0.55em}}l@{}}
      \raisebox{-0.2em}{\includegraphics[height=1.2em]{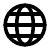}} & {\sffamily\bfseries Project Page:}\hspace{0.55em}%
      \href{https://wluli.github.io/AlphaAnything/}{\texttt{wluli.github.io/AlphaAnything/}}\\
      \raisebox{-0.2em}{\includegraphics[height=1.2em]{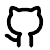}} & {\sffamily\bfseries Code:}\hspace{0.55em}%
      \href{https://github.com/WLuLi/AlphaAnything}{\texttt{github.com/WLuLi/AlphaAnything}}
    \end{tabular}%
  }%
}

\makeatletter
\renewcommand{\@maketitle}{%
  \vbox{\hsize\textwidth
    \raggedright
    \setlength{\parskip}{8pt}
    \includegraphics[width=\textwidth]{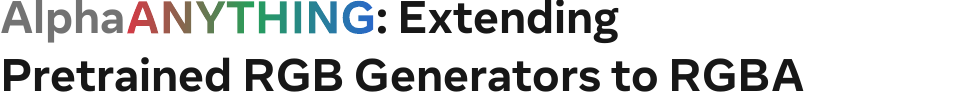}\par
    {\renewcommand{\arraystretch}{1.32}\setlength{\tabcolsep}{0pt}%
      \begin{tabular}{l}\@author\end{tabular}\par}
    \AlphaResourceLinks\par
  }%
}
\makeatother

\renewenvironment{abstract}{%
  \par\noindent
  \begin{tcolorbox}[width=\textwidth,colback=AlphaAbstractBackground,
    colframe=AlphaAbstractBackground,boxrule=0pt,arc=7pt,
    boxsep=0pt,left=12pt,right=12pt,top=12pt,bottom=12pt,
    before skip=8pt,after skip=12pt]
  \setlength{\parindent}{0pt}\setlength{\parskip}{0pt}
  \renewcommand{\method}{\mbox{\textsc{AlphaAnything}}\xspace}
  \noindent\includegraphics{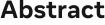}\par
  
  \fontsize{10}{12}\selectfont
}{%
  \par
  \end{tcolorbox}
}

\title{AlphaAnything: Extending Pretrained RGB Generators to RGBA}

\author{\normalsize\bfseries
Qijie Wang$^{1,2,}$\thanks{Equal contribution. Emails: \texttt{qijiewang02@gmail.com}, \texttt{xuzhixin.66@bytedance.com}.}\kern0.35em$^{,}$%
\setcounter{footnote}{2}\thanks{Work done during an internship at ByteDance.}
\quad Zhixin Xu$^{2,*}$ \quad Zhe Li$^{3,2}$ \quad Haowei Zhu$^{1}$ \\
\normalsize\bfseries Honghao Qiao$^{2}$ \quad Bo Peng$^{2}$ \quad Dawei Li$^{2,\dagger}$ \quad Bin Wang$^{1,\dagger}$ \\[4pt]
\normalfont $^{1}$Tsinghua University \quad $^{2}$ByteDance \quad $^{3}$Peking University \\
\normalfont $^{\dagger}$Corresponding authors}

\iclrfinalcopy

\begin{document}

\maketitle
\lhead{}
\renewcommand{\headrulewidth}{0pt}

\begin{abstract}
Existing RGBA generation methods commonly learn a transparency-specific representation through reconstruction, introducing an additional training stage and potentially altering the RGB decoding function inherited from pretrained models. We present \method, a lightweight framework that extends pretrained RGB generators to RGBA directly in their original latent space, without RGBA reconstruction pretraining. \method inserts LoRA modules into the generative transformer and attaches an alpha branch to the frozen RGB VAE decoder. Unidirectional RGB-to-alpha feature sharing allows opacity supervision to shape the generative latent while guaranteeing that, for any fixed latent, RGB decoding remains identical to that of the pretrained decoder. We further introduce Endpoint-Aware Alpha Loss (EA Loss), which derives opacity supervision from the complementary foreground--background weights used in compositing. It supports exact opacity endpoints and continuous partial transparency while emphasizing localized alpha errors. Experiments on text-to-RGBA and image-to-RGBA tasks across multiple pretrained RGB backbones demonstrate competitive generation quality, and controlled ablations validate the shared-latent training design and EA Loss.
\end{abstract}

\section{Introduction}
\label{sec:introduction}

Pretrained image generators can synthesize high-quality visual content from text or image conditions~\citep{rombach2022highresolution,wu2025qwenimage,wu2026visualgeneration}. Many downstream applications, however, require assets that can be directly layered and composited into new scenes, rather than RGB images tied to a fixed background~\citep{fontanella2024rgba,yin2025qwenlayered}. RGBA images support this use case by representing appearance and opacity separately: the RGB channels describe foreground color, while the alpha channel controls how the foreground is blended with its background~\citep{xu2017deepmatting,zhang2024layerdiffuse}. A practical RGBA extension should therefore add transparency generation while reusing the conditioning and appearance priors of a pretrained RGB model.

\begin{figure}[p]
\centering
\includegraphics[width=\AlphaFigureWidth,keepaspectratio]{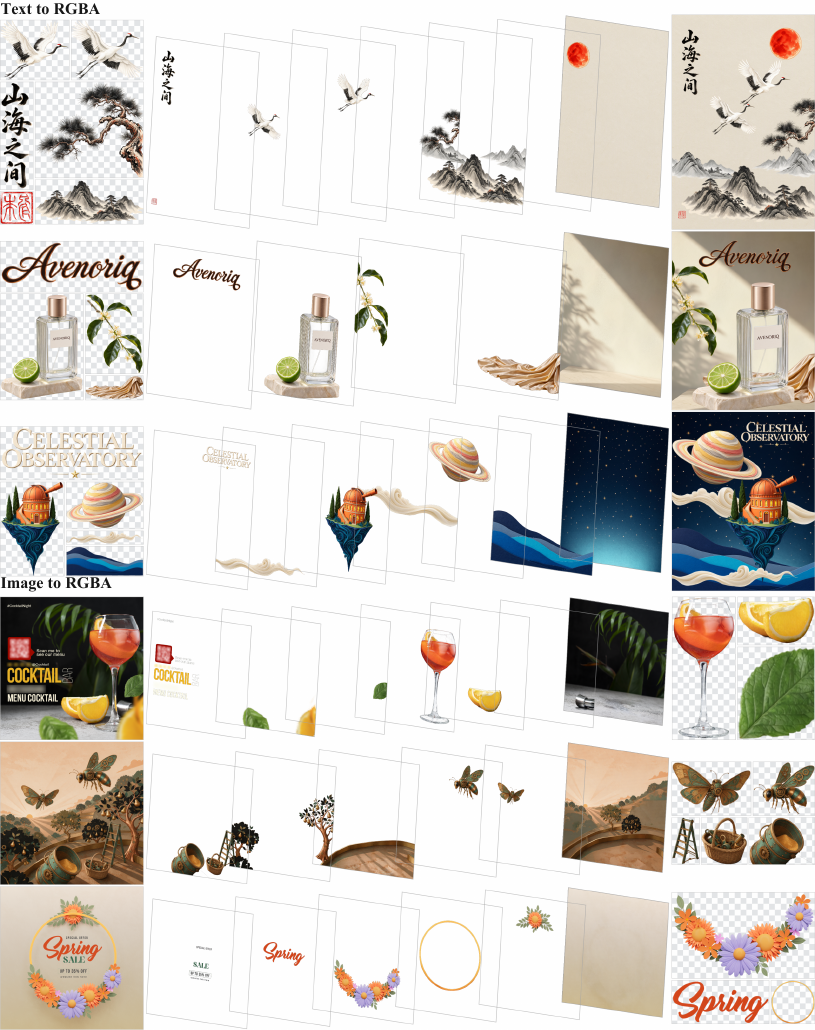}
\caption{Representative text- and image-to-RGBA results. Each prompt specifies an asset to generate or extract. For text-to-RGBA, independently generated layers are manually arranged into the displayed compositions.}

\label{fig:teaser}
\end{figure}

Existing RGBA generation methods commonly introduce a transparency-specific representation through reconstruction training. LayerDiffuse learns a transparency encoder and decoder with reconstruction supervision~\citep{zhang2024layerdiffuse}, while AlphaVAE and Qwen-Image-Layered train dedicated four-channel autoencoders~\citep{wang2025alphavae,yin2025qwenlayered}. Although effective, this strategy adds a separate representation-learning stage and couples subsequent generative training to the reconstructed RGBA latent space. Updating an RGB autoencoder for RGBA reconstruction can also change its pretrained RGB decoding function, with potential losses in color or high-frequency detail. Zippo~\citep{xie2024zippo} shows that a dedicated RGBA VAE is not strictly necessary by encoding replicated alpha with a pretrained RGB VAE, but it still models appearance and opacity with separate latent states. These observations motivate a simpler question: can a pretrained RGB generator learn opacity directly in its existing latent space, using generative supervision alone?

We answer this question with \method, a lightweight framework that augments a pretrained RGB generator with a shared-latent alpha pathway. We insert LoRA modules~\citep{hu2022lora} into the generative transformer and attach a trainable alpha branch to the frozen RGB VAE decoder. During training, flow-matching supervision updates the transformer LoRA, while opacity supervision passes through the frozen decoder features to shape the same predicted latent. Within the decoder, information flows only from the RGB pathway to the alpha branch; the alpha state is never fed back into the RGB pathway. Consequently, for any fixed latent, the augmented decoder produces exactly the same RGB output as the original pretrained decoder, while predicting alpha from the RGB feature hierarchy. RGB and alpha are thus obtained from one latent and one augmented decoder traversal, without an RGBA autoencoder or a separate alpha latent.

Learning opacity also requires an objective that reflects its role in compositing. Alpha is not merely a bounded image channel: $a=0$ and $a=1$ denote exact transparency and opacity, intermediate values encode partial visibility, and $(1-a,a)$ gives the complementary background--foreground mixing weights. We therefore introduce \emph{Endpoint-Aware Alpha Loss (EA Loss)}. EA Loss represents this compositing pair on the Bernoulli simplex and uses its Fisher--Rao coordinate~\citep{miyamoto2024fisherrao} to parameterize the full opacity range. An endpoint-compatible Fenchel--Young formulation represents exact endpoints with finite scores and retains corrective gradients when a prediction saturates at the wrong endpoint. The same complementary weights further separate background- and foreground-associated errors before root aggregation, increasing the relative contribution of localized defects such as thin structures, small holes, and background leakage. We characterize these properties theoretically and evaluate the complete objective against standard alpha $L_1$ supervision.

The paper makes three contributions.
\textbf{(1) Shared-latent RGBA extension.}
We show that pretrained RGB generators can acquire text- and image-conditioned RGBA generation without RGBA reconstruction pretraining. A unidirectional RGB-to-alpha decoder branch lets opacity supervision shape the generative latent while preserving the pretrained RGB decoding function for every fixed latent.
\textbf{(2) Endpoint-aware opacity supervision.}
We derive EA Loss from the complementary compositing weights of alpha. The objective accommodates exact endpoints and continuous partial transparency while emphasizing spatially localized alpha errors, and we establish its key optimization and error-control properties.
\textbf{(3) Evaluation across tasks and backbones.}
We instantiate \method on multiple pretrained RGB generators and evaluate it in both text-to-RGBA and image-to-RGBA settings, where it achieves competitive generation quality without an RGBA autoencoder or a separate alpha latent.

\section{Related Work} \label{sec:related} \noindent\textbf{RGB image generation.} Modern diffusion and flow-based image generators learn powerful text- and image-conditioned visual priors in pretrained RGB latent spaces~\citep{rombach2022highresolution,peebles2023scalable,wu2025qwenimage,wu2026visualgeneration}. These models provide strong appearance synthesis and editing capabilities, but their outputs remain conventional three-channel images whose content is tied to an RGB background. Parameter-efficient training methods such as LoRA~\citep{hu2022lora} further show that new generative capabilities can be introduced while largely reusing a pretrained backbone. Our work builds on this premise, extending pretrained RGB generators with opacity generation rather than replacing their learned RGB representation.

\noindent\textbf{RGBA generation and representations.} Image matting estimates opacity for content already observed in an RGB image, classically with auxiliary spatial guidance such as a trimap or mask~\citep{xu2017deepmatting}. It therefore differs from RGBA generation: matting operates on visible image evidence rather than semantic generation instructions, does not jointly synthesize appearance and opacity, and typically recovers the visible matte rather than completing object layers behind occlusions. Existing native RGBA methods adopt different transparency representations. LayerDiffuse learns a latent transparency representation and decoder through reconstruction training~\citep{zhang2024layerdiffuse}, while AlphaVAE and Qwen-Image-Layered train dedicated four-channel autoencoders~\citep{wang2025alphavae,yin2025qwenlayered}. These approaches introduce an additional reconstruction-trained representation to support transparency generation. Zippo takes a different route by reusing a pretrained RGB VAE and representing alpha through replicated RGB channels~\citep{xie2024zippo}; however, color and matte are still modeled with separate latent states. These works show that transparency can be incorporated through substantially different representation choices. \method instead learns RGBA generation in the original RGB latent space while retaining the pretrained RGB autoencoder.

\section{\method}
\label{sec:method}

Our central insight is that RGBA generation can be learned in the original RGB latent space. \method realizes this with DiT LoRA modules and a lightweight alpha branch embedded in the frozen RGB VAE decoder. Within the alpha branch, features are shared unidirectionally from RGB to alpha, preserving the original RGB decoding process for any fixed latent, while gradients from the alpha loss propagate through the decoder to update the DiT LoRA parameters. Our Endpoint-Aware Alpha Loss (EA Loss) further provides supervision for both exact endpoints and continuous opacity. Figure~\ref{fig:overview}  summarizes the framework.

\begin{figure}[t]

\centering
\includegraphics[width=\linewidth]{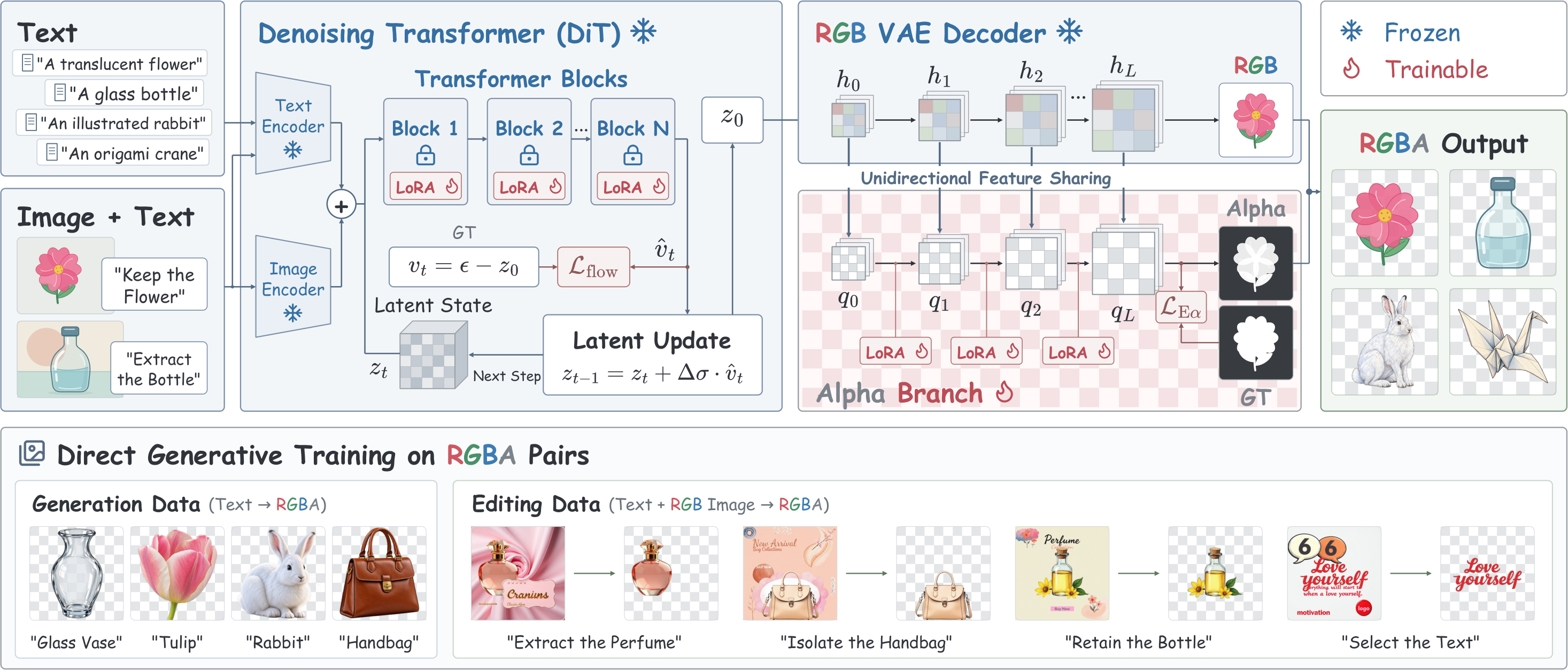}
\par
\caption{Overview of \method.  Transformer LoRA and a VAE-embedded alpha branch learn RGBA generation through generative training. Unidirectional feature sharing keeps RGB decoding for any fixed latent.}
\label{fig:overview}
\par
\end{figure}

\subsection{Preliminaries}
\label{sec:preliminary}

\textbf{RGBA image.} A transparent image is represented as an RGBA layer $I_t=(I_c,I_\alpha)$, where $I_c\in[0,1]^{N\times3}$ is the \emph{straight} foreground color and $I_\alpha\in[0,1]^N$ is opacity, with $N=HW$.  Given a background $B\in[0,1]^{N\times3}$, compositing in a common linear color space is
\begin{equation}
 C_B(I_t)=I_\alpha\odot I_c+(1-I_\alpha)\odot B
         =I_p+(1-I_\alpha)\odot B,
 \label{eq:compositing}
\end{equation}
where $I_p=I_\alpha\odot I_c$ is the \emph{premultiplied} foreground and opacity is broadcast over color channels.  The endpoints $I_{\alpha,i}=0$ and $1$ represent complete transparency and opacity; intermediate values represent partial transparency. For brevity, we denote $I_\alpha$ by $a$ below.

\textbf{RGB training targets.}
As implied by the compositing equation in \eqref{eq:compositing}, a composite does not uniquely determine its foreground color--opacity pair. In particular, when opacity vanishes, the stored RGB values are physically undefined and do not affect the composite~\citep{zhang2024layerdiffuse}. However, generative training in the pretrained RGB latent space requires a well-defined RGB target over the entire image, including transparent regions; leaving these values unconstrained would make the training target underdetermined. We therefore use the multiscale color-padding convention of LayerDiffuse to define a dense RGB target:
\begin{equation}
\widetilde I_c=\operatorname{Pad}(I_c,a),\qquad
z_0=E(\widetilde I_c).
\label{eq:padding}
\end{equation}
The padding propagates foreground color from coarse to fine into transparent regions, providing a structured RGB target from which the frozen RGB encoder $E$ defines the target latent $z_0$.

\subsection{Extending RGB Generation with Opacity}
\label{sec:shared-state}

Our goal is to endow a pretrained RGB generator with opacity generation without introducing a separate transparency latent or altering its pretrained RGB decoding function. Our key insight is that opacity can be learned directly in the original RGB latent space: alpha supervision is propagated back to the generator, so the sampled latent is jointly shaped by appearance and opacity. Opacity is therefore learned as part of generation rather than estimated post hoc from a completed RGB image.

Let $\theta_0$ denote the pretrained generator parameters, $\psi$ its trainable LoRA parameters, and $D$ the frozen RGB decoder. The extended model produces
\begin{equation}
z=G_{\theta_0,\psi}(\epsilon,c),\qquad \widehat I_c=D(z),\qquad s=H_\phi(z;D),
\label{eq:lift}
\end{equation}
where $\epsilon\sim\mathcal N(0,I)$ is sampling noise and $H_\phi$ is the trainable alpha branch producing opacity scores $s\in\mathbb{R}^N$. RGB and opacity thus originate from the same sampled latent $z$, with no separately encoded or sampled alpha latent. Because gradients from the alpha objective pass through the frozen decoder to $\psi$, opacity supervision also shapes the latent produced by the generator.

The alpha branch runs alongside the spatial hierarchy of $D$ and reuses the RGB features produced during the same decoder traversal. Let $h_\ell$ denote the RGB feature at spatial stage $\ell$, and let $q_\ell$ denote the spatially aligned alpha state. At the decoder bottleneck, we initialize $q_0=A_\phi(h_0)$ and refine it with cross-attention using alpha queries and RGB keys and values. At each subsequent stage,
\begin{equation}
\bar q_\ell=q_\ell+\gamma_\ell B_{\phi,\ell}^{3\times3}\left(A_{\phi,\ell}^{1\times1}(h_\ell)\right),\qquad q_{\ell+1}=U_{\phi,\ell}(\bar q_\ell),
\label{eq:alpha-stage}
\end{equation}
where $A_{\phi,\ell}^{1\times1}$ reduces the $C_\ell$ RGB channels to a low-rank bottleneck of width $r_\ell\ll C_\ell$, and $B_{\phi,\ell}^{3\times3}$ spatially projects the compressed feature into the alpha state. The learned gate $\gamma_\ell$ controls the contribution of the RGB feature at each stage, while $U_{\phi,\ell}$ contains the alpha-only residual refinement, channel transition, and learned upsampling that advance the alpha state to the next spatial scale. A final spatial head maps the last alpha state to $s$; Section~\ref{sec:fisher} defines the corresponding opacity mapping. This reuse keeps the extension lightweight: only the alpha stream and low-rank RGB-to-alpha projections are added, with no additional RGB-decoder forward.

Crucially, all cross-branch dependencies are directed from RGB to alpha. The RGB hierarchy remains
\begin{equation}
h_{\ell+1}=d_\ell(h_\ell),
\label{eq:rgb-update}
\end{equation}
where $d_\ell$ is the original frozen RGB operation and has no dependence on $q_\ell$ or $\phi$.

\begin{proposition}[Fixed-latent RGB preservation]
\label{prop:triangular}
For every fixed latent $z$ and alpha parameter setting $\phi$, the augmented decoder's RGB output satisfies
\begin{equation}
D_{\rm aug}^{\rm RGB}(z;\phi)=D(z),\qquad \frac{\partial D_{\rm aug}^{\rm RGB}(z;\phi)}{\partial\phi}=0.
\label{eq:rgb-preservation}
\end{equation}
\end{proposition}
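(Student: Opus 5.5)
The argument is an induction over the decoder's spatial stages. It uses only the dependency structure in \eqref{eq:rgb-update} and \eqref{eq:alpha-stage}. Write the augmented decoder as a joint map on pairs $(h_\ell,q_\ell)$. Its RGB component is updated by $h_{\ell+1}=d_\ell(h_\ell)$, and its alpha component by $q_{\ell+1}=U_{\phi,\ell}\bigl(q_\ell+\gamma_\ell B^{3\times3}_{\phi,\ell}(A^{1\times1}_{\phi,\ell}(h_\ell))\bigr)$. The Jacobian of one stage with respect to $(h_\ell,q_\ell)$ is therefore block lower-triangular: $q_{\ell+1}$ may depend on $h_\ell$, but $h_{\ell+1}$ does not depend on $q_\ell$ or $\phi$. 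The RGB output is $D^{\rm RGB}_{\rm aug}(z;\phi)=d_{\rm out}(h_L)$, where $d_{\rm out}$ collects the frozen final RGB layers. The original decoder computes $D(z)=d_{\rm out}(d_{L-1}\circ\cdots\circ d_0(h_0))$.

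\textbf{Step 1: equality of values.} We show by induction on $\ell$ that the RGB feature $h^{\rm aug}_\ell$ in the augmented traversal equals the original feature $h_\ell$, for every $\phi$. At the base, $h_0$ is produced from $z$ by the frozen pre-bottleneck layers of $D$. The alpha initialization $q_0=A_\phi(h_0)$ and the cross-attention refinement only \emph{read} $h_0$: RGB supplies the keys and values, and only the alpha queries are updated. Hence $h^{\rm aug}_0=h_0$. For the inductive step, \eqref{eq:rgb-update} gives $h^{\rm aug}_{\ell+1}=d_\ell(h^{\rm aug}_\ell)=d_\ell(h_\ell)=h_{\ell+1}$. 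Applying the frozen output layers then yields $D^{\rm RGB}_{\rm aug}(z;\phi)=D(z)$.

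\textbf{Step 2: zero derivative.} Once the equality holds for all $\phi$ in an open set, the map $\phi\mapsto D^{\rm RGB}_{\rm aug}(z;\phi)$ is constant, so its derivative is zero. As a cross-check that also covers the autodiff view, apply the chain rule along the triangular structure. We have $\partial h_{\ell+1}/\partial\phi=(\partial d_\ell/\partial h_\ell)\,\partial h_\ell/\partial\phi$, since $d_\ell$ has no explicit $\phi$- or $q_\ell$-dependence. Together with $\partial h_0/\partial\phi=0$, induction gives $\partial h_\ell/\partial\phi=0$ for all $\ell$, and hence $\partial D^{\rm RGB}_{\rm aug}/\partial\phi=0$. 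Gradients from the alpha loss can still reach $z$, and thus $\psi$, through $\partial q/\partial h_\ell\cdot\partial h_\ell/\partial z$. This does not contradict the claim, because the statement fixes $z$.

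\textbf{Main obstacle.} The only delicate point is making precise that \emph{every} interaction between the branches is read-only on the RGB side. This covers the bottleneck cross-attention, the gated injections $\gamma_\ell$, and any normalization layers. In particular, no shared normalization statistics or in-place feature modifications may let $q$ or $\phi$ leak into $h_\ell$. I would formalize this with the explicit hypothesis, stated in the text, that each $d_\ell$ and the output head are functions of the RGB state alone. The proposition then reduces to the purely structural induction above.
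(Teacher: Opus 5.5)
Your proposal is correct and follows essentially the same route as the paper: an induction over decoder stages showing that the RGB state matches the original decoder's at every stage, because each frozen $d_\ell$ reads neither $q_\ell$ nor $\phi$, so the output equals $D(z)$ for every $\phi$ and its $\phi$-derivative vanishes. Your chain-rule cross-check and block lower-triangular Jacobian correspond to the observation the paper makes immediately after its proof.
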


This follows directly from  \eqref{eq:rgb-update}: RGB decoding is independent of the alpha state, so for any fixed latent its output is identical to that of the pretrained decoder. Meanwhile, $D$ remains differentiable, allowing gradients from the alpha loss to propagate through it and supervise the generator.

\begin{figure}[tb]

\centering
\includegraphics[width=\linewidth]{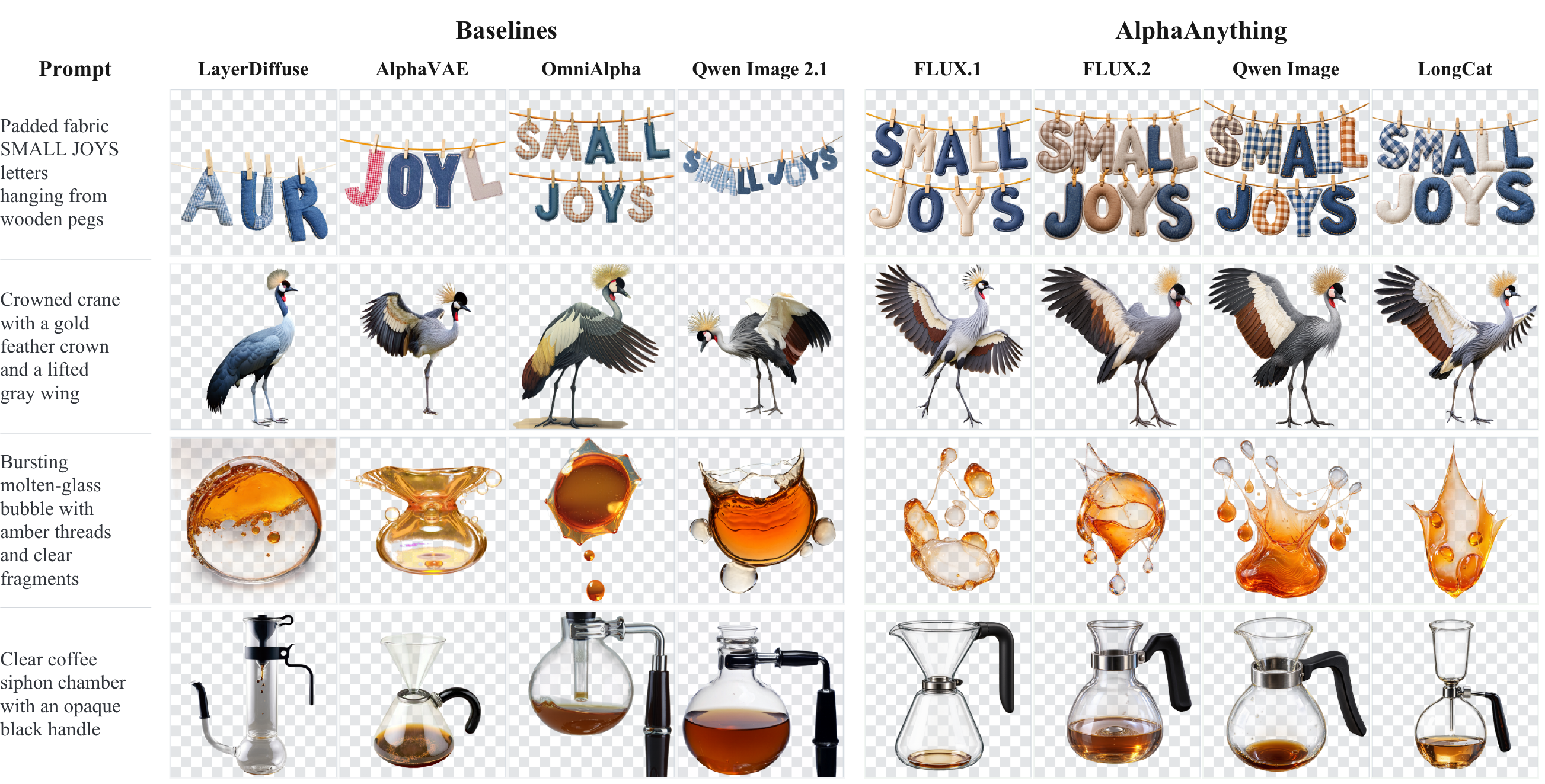}
\par
\caption{Qualitative text-to-RGBA comparison. \method generates detailed lettering and feathers while capturing glass transparency.}
\label{fig:t2rgba-backbone-comparison}
\end{figure}

\subsection{Endpoint-Aware Alpha Loss}
\label{sec:fisher}

Alpha determines foreground--background mixing through the complementary weights $(a,1-a)$. Our central idea is to construct both the pixelwise discrepancy and its spatial aggregation from this compositing pair. We use the Fisher--Rao geometry of the pair to define an opacity coordinate, and reuse its two components to weight the pixelwise gaps before separate root reductions.

\textbf{Compositing-pair geometry.} Equation~\eqref{eq:compositing} associates each opacity value $a$ with the background--foreground weight pair $(1-a,a)$. This pair lies on the Bernoulli simplex, with $a=0$ and $a=1$ at its two vertices and partial opacity in its interior. Its Fisher information is $\mathcal I(a)=1/[a(1-a)]$ for $0<a<1$. The corresponding normalized Fisher--Rao arc-length coordinate~\citep{miyamoto2024fisherrao} and inverse are
\begin{equation}
 u=\chi(a)=\frac{2}{\pi}\arcsin\sqrt a,\qquad
 a=f(u)=\sin^2\!\left(\frac{\pi u}{2}\right).
 \label{eq:fisher-coordinate}
\end{equation}
Both extend continuously to the two simplex vertices. This geometry is non-uniform in physical alpha: near transparency, $\chi(\delta)^2=4\delta/\pi^2+O(\delta^2)$ as $\delta\downarrow0$, with symmetric behavior near opacity, while it remains locally smooth in the interior. Consequently, small residual foreground mass near transparency and residual background mass near opacity receive greater separation in the supervision space, while genuine partial opacity remains represented continuously.

\textbf{Score supervision in the pair coordinate.} To predict this coordinate while covering both the interior and the vertices of the simplex, the alpha branch outputs an unconstrained score $s\in\mathbb{R}$. We map it to the Fisher--Rao coordinate by $p(s)=\clip((s+1)/2,0,1)$ and predict $\widehat a=f(p(s))$. Finite scores $s\leq-1$ and $s\geq1$ therefore represent the two compositing vertices exactly. Direct regression through this clipped mapping, however, would have zero gradient when a prediction saturates at the wrong vertex. We instead supervise $s$ with a Fenchel--Young loss~\citep{blondel2020fenchelyoung} for the target coordinate $u=\chi(a)$:
\begin{equation}
 g(s,u)=
 \begin{cases}
 u(u-1-s), & s\leq-1,\\
 ((s+1)/2-u)^2, & -1<s<1,\\
 (1-u)(s-u), & s\geq1.
 \end{cases}
 \label{eq:fy-gap}
\end{equation}
Within the active interval $-1<s<1$, $g$ is exactly the squared coordinate error, while its gradient satisfies $\partial g/\partial s=p(s)-u$ everywhere. Thus an incorrectly transparent prediction receives gradient $-u$, and an incorrectly opaque prediction receives $1-u$, allowing either to leave the saturated region. Correct endpoint predictions remain exactly representable, and every partial-opacity target has the finite exact solution $s=2u-1$.

\textbf{Pair-weighted aggregation.} Let $g_i=g(s_i,\chi(a_i))$ denote the per-pixel gap over $N$ pixels. We use the target compositing weights to separate foreground- and background-associated contributions of the same pixelwise gap. Applying a square-root reduction to each contribution gives:
\begin{equation}
 R_0=\sqrt{\frac1N\sum_i(1-a_i)g_i},\qquad
 R_1=\sqrt{\frac1N\sum_i a_i g_i},\qquad
 \ell_{\mathrm E\alpha}(s,a)=R_0+R_1.
 \label{eq:root-risk}
\end{equation}
The two weights assign each error according to its target background and foreground mass: transparent pixels contribute to $R_0$, opaque pixels to $R_1$, and partial-opacity pixels continuously to both. For a binary target, suppose nonzero gaps are confined to a fraction $\rho$ of pixels in one endpoint group, with mean gap $\bar g$ over that region. The corresponding root contribution is $\sqrt{\rho\bar g}$, whereas direct averaging gives $\rho\bar g$. Thus, for a fixed mean gap, the root contribution decreases with the square root of the region's area fraction. We select a zero generalized gradient for an exactly zero root. Since $(1-a_i)+a_i=1$ and each gap vanishes exactly at its target, $\ell_{\mathrm E\alpha}=0$ if and only if $\widehat a=a$.

\subsection{Joint Generative Training and Inference}
\label{sec:joint-training}

We jointly optimize the transformer LoRA $\psi$ and alpha branch $\phi$, while keeping the pretrained transformer parameters $\theta_0$ and the RGB VAE encoder and decoder frozen. We follow the backbone's standard flow-matching objective. Given the clean target latent $z_0$ in \eqref{eq:padding}, noise $\epsilon\sim\mathcal N(0,I)$, and noise level $\sigma\in[0,1]$, the noisy latent and target velocity are
\begin{equation}
z_\sigma=(1-\sigma)z_0+\sigma\epsilon,\qquad v^\star=\epsilon-z_0.
\label{eq:flow-path}
\end{equation}
The model predicts
\begin{equation}
\widehat v=v_{\theta_0,\psi}(z_\sigma,\sigma,c),\qquad \widehat z_0=z_\sigma-\sigma\widehat v,
\label{eq:flow}
\end{equation}
where $\widehat z_0$ is the corresponding one-step clean-latent estimate. The alpha branch is applied to $\widehat z_0$ and supervised against the target opacity $a$.

The complete objective is
\begin{equation}
\mathcal L=\mathbb E\!\left[\|\widehat v-v^\star\|_2^2+\lambda_{\mathrm E\alpha}w(\sigma)\ell_{\mathrm E\alpha}\!\left(H_\phi(\widehat z_0;D),a\right)\right],
\qquad
w(\sigma)=\min\!\left\{1,\frac{0.5^2}{\sigma^2+10^{-4}}\right\}.
\label{eq:total-loss}
\end{equation}
The bounded weight $w(\sigma)$ reduces alpha supervision at high noise levels, where the one-step clean-latent estimate is less reliable.

At inference, the backbone produces the final clean latent $z_0$, which is
passed directly to the augmented VAE decoder:
\begin{equation}
 \widehat I_c=D(z_0),\qquad
 \widehat a=f\!\left(p(H_\phi(z_0;D))\right).
 \label{eq:inference}
\end{equation}

\section{Experiments}
\label{sec:experiments}

We evaluate whether pretrained RGB generators can acquire RGBA capability through generative training alone. All \method models keep the RGB VAE encoder and decoder frozen and jointly train the transformer LoRA and alpha head with flow matching and EA Loss, without any RGBA reconstruction training. For text-to-RGBA, we construct and curate 8,963 RGBA training samples using an image generation model and evaluate \method on FLUX.1-dev~\citep{flux2024}, FLUX.2 Klein Base 9B~\citep{flux-2-2025}, Qwen-Image~\citep{wu2025qwenimage}, and LongCat-Image~\citep{meituanlongcatteam2025longcatimagetechnicalreport}. For image-to-RGBA, we use PrismLayersPro~\citep{chen2025prismlayers} with corrected prompt annotations, and evaluate on FLUX.2 Klein Base 9B~\citep{flux-2-2025} and Qwen-Image-Edit~\citep{wu2025qwenimage}.

All \method variants are trained for 20K optimizer updates with an effective global batch size of 16, a learning rate of $10^{-4}$ for both the DiT LoRA and alpha branch, and DiT LoRA rank 64. Trainable parameter counts for each backbone are reported in Appendix~\ref{app:efficiency}; detailed training configurations, evaluation protocols, and metric definitions are provided in Appendix~\ref{app:protocol}.

\subsection{Main Results}

We compare \method with existing text-to-RGBA and image-to-RGBA methods.
We evaluate on two test sets constructed for this study: \textbf{RGBA-Text}, containing 294 text prompts, and \textbf{RGBA-Image}, containing 735 image-conditioned examples. We also evaluate text-to-RGBA generation on the 402 prompts in AlphaTest~\citep{wang2025alphavae}. Bold and underlined table entries mark the best and second-best values.

\textbf{Text-to-RGBA generation.}
We evaluate text-to-RGBA generation on RGBA-Text and AlphaTest using CLIPScore~\citep{hessel2021clipscore}, Aesthetic Score~\citep{schuhmann2022laion}, and ImageReward~\citep{xu2023imagereward}, averaging results on black and white composites. On AlphaTest, we additionally report FID~\citep{heusel2017gans} against the reference images. We compare four \method backbones with AlphaVAE (based on FLUX.1-dev), LayerDiffuse, OmniAlpha (based on Qwen-Image-Edit), and Qwen-Image-2.1~\citep{qwen2026image21}.

\begin{table}[tb]

\centering
\caption{Quantitative results of text-to-RGBA generation on RGBA-Text.}
\label{tab:t2rgba-manual}

\small
\renewcommand{\arraystretch}{1.25}
\setlength{\tabcolsep}{8pt}
\AlphaFitTable{%
\begin{tabular}{@{}llccc@{}}
\toprule
\multicolumn{2}{c}{\AlphaTableHead{Method / backbone}} & \AlphaTableHead{CLIPScore\,$\uparrow$} & \AlphaTableHead{Aesthetic\,$\uparrow$} & \AlphaTableHead{ImageReward\,$\uparrow$} \\
\midrule
\multicolumn{2}{c}{LayerDiffuse} & 0.8542 & 5.4505 & 0.5921 \\
\multicolumn{2}{c}{AlphaVAE} & 0.8306 & 5.2679 & 0.5159 \\
\multicolumn{2}{c}{OmniAlpha} & 0.8508 & 5.2850 & 0.7535 \\
\multicolumn{2}{c}{Qwen-Image-2.1} & 0.8465 & 5.3011 & 0.6189 \\
\midrule
\multirow{4}{*}{\method} & FLUX.1 & 0.8353 & \underline{5.5256} & 0.7588 \\
 & FLUX.2 & 0.8656 & 5.5125 & \underline{0.9752} \\
 & Qwen Image & \textbf{0.8684} & 5.5240 & 0.9477 \\
 & LongCat & \underline{0.8657} & \textbf{5.5295} & \textbf{1.0013} \\
\bottomrule
\end{tabular}%
}
\end{table}

As shown in Tables~\ref{tab:t2rgba-manual} and~\ref{tab:t2rgba-alphatest}, \method achieves improvements in visual quality and text--image alignment metrics. Additional results on AIM-500~\citep{li2021automaticmatting} are reported in Appendix~\ref{app:t2rgba-aim500}. For qualitative comparisons in Figure~\ref{fig:t2rgba-backbone-comparison}, \method shows higher visual quality. Further matched examples are provided in Appendix~\ref{app:qualitative}.

\begin{table}[tb]

\centering
\caption{Quantitative results of text-to-RGBA generation on AlphaTest.}
\label{tab:t2rgba-alphatest}

\small
\renewcommand{\arraystretch}{1.25}
\setlength{\tabcolsep}{8pt}
\AlphaFitTable{%
\begin{tabular}{@{}llcccc@{}}
\toprule
\multicolumn{2}{c}{\AlphaTableHead{Method / backbone}} & \AlphaTableHead{\mbox{FID\,$\downarrow$}} & \AlphaTableHead{CLIPScore\,$\uparrow$} & \AlphaTableHead{Aesthetic\,$\uparrow$} & \AlphaTableHead{ImageReward\,$\uparrow$} \\
\midrule
\multicolumn{2}{c}{LayerDiffuse} & 81.2141 & \textbf{0.8487} & 5.5169 & 1.1365 \\
\multicolumn{2}{c}{AlphaVAE} & 74.3047 & 0.8214 & 5.5919 & 1.1098 \\
\multicolumn{2}{c}{OmniAlpha} & \underline{72.7426} & 0.8384 & 5.3542 & 1.2072 \\
\multicolumn{2}{c}{Qwen-Image-2.1} & \textbf{72.2720} & 0.8285 & 5.4672 & 1.1578 \\
\midrule
\multirow{4}{*}{\method} & FLUX.1 & 77.3089 & 0.8096 & \underline{5.7105} & 1.2880 \\
 & FLUX.2 & 75.1796 & \underline{0.8442} & 5.6352 & \underline{1.4464} \\
 & Qwen Image & 75.8076 & 0.8342 & 5.6799 & 1.3758 \\
 & LongCat & 74.9846 & 0.8413 & \textbf{5.7171} & \textbf{1.5050} \\
\bottomrule
\end{tabular}%
}
\end{table}

\textbf{Image-to-RGBA generation.}
Table~\ref{tab:i2rgba-manual} compares the two \method backbones with OmniAlpha~\citep{yu2026omnialpha} and Qwen-Image-2.1 on RGBA-Image, using RGB L1, Soft IoU, SAD, LPIPS~\citep{zhang2018perceptual}, Alpha Grad, and Alpha Conn~\citep{rhemann2009mattingbenchmark}. Figure~\ref{fig:i2rgba-qualitative} compares the predicted RGBA outputs and enlarged alpha details.

\begin{figure}[tb]

\centering
\includegraphics[width=\linewidth]{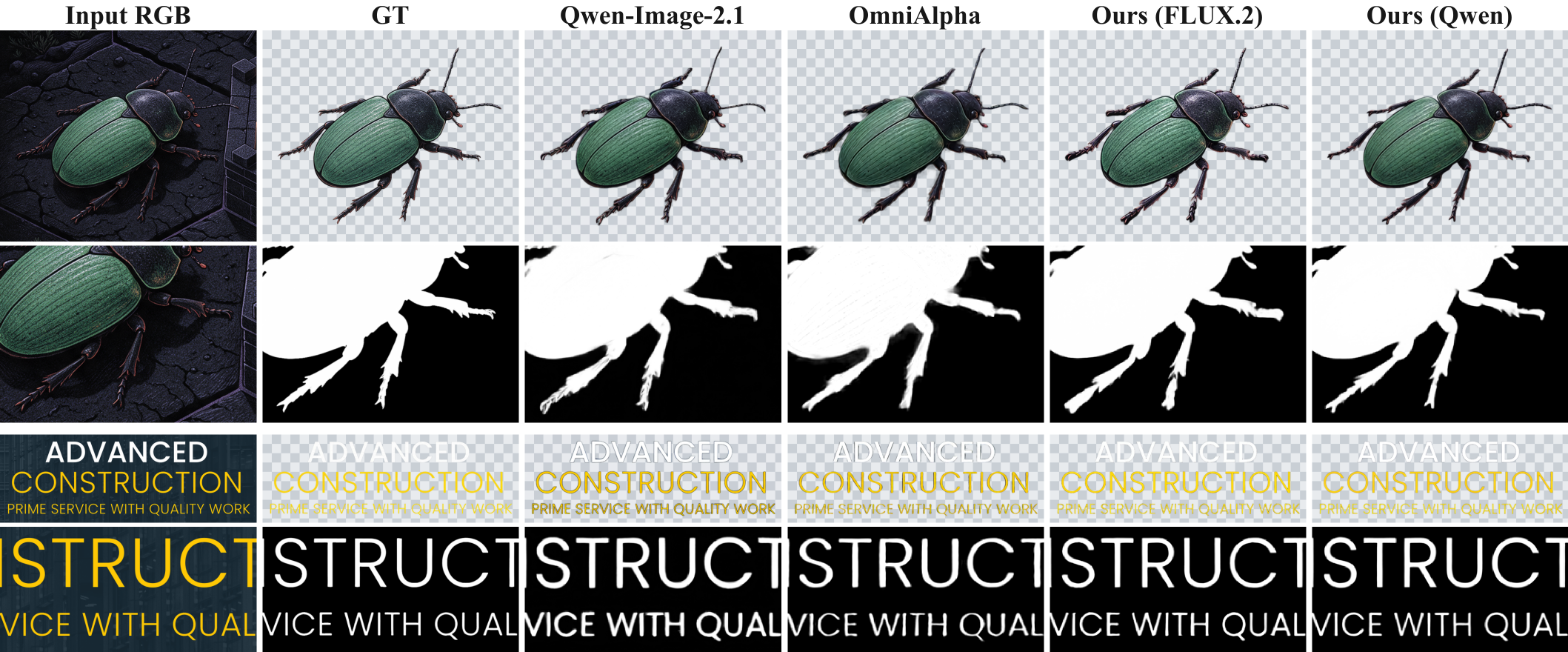}
\par
\caption{Image-to-RGBA comparison. \method produces cleaner alpha boundaries.}
\label{fig:i2rgba-qualitative}
\end{figure}

\begin{table}[tb]
\centering
\caption{Quantitative results of image-to-RGBA evaluation on RGBA-Image.}
\label{tab:i2rgba-manual}

\small
\renewcommand{\arraystretch}{1.25}
\setlength{\tabcolsep}{6pt}
\AlphaFitTable{%
\begin{tabular}{@{}llcccccc@{}}
\toprule
\multicolumn{2}{c}{\AlphaTableHead{Method / backbone}} & \AlphaTableHead{\mbox{RGB L1\,$\downarrow$}} & \AlphaTableHead{\mbox{Soft IoU\,$\uparrow$}} & \AlphaTableHead{\mbox{SAD\,$\downarrow$}} & \AlphaTableHead{\mbox{LPIPS\,$\downarrow$}} & \AlphaTableHead{\mbox{Alpha Grad\,$\downarrow$}} & \AlphaTableHead{\mbox{Alpha Conn\,$\downarrow$}} \\
\midrule
\multicolumn{2}{c}{OmniAlpha} & 0.1052 & 0.6881 & 178.4325 & 0.0911 & 70.8791 & 176.9433 \\
\multicolumn{2}{c}{Qwen-Image-2.1} & \underline{0.0868} & 0.6932 & 181.5245 & 0.0830 & 81.6191 & 166.0596 \\
\midrule
\multirow{2}{*}{\method} & FLUX.2 & 0.1155 & \underline{0.8296} & \underline{46.3572} & \underline{0.0207} & \underline{69.5947} & \underline{46.0840} \\
 & Qwen Image Edit & \textbf{0.0746} & \textbf{0.9223} & \textbf{30.3820} & \textbf{0.0136} & \textbf{27.5221} & \textbf{30.0584} \\
\bottomrule
\end{tabular}%
}

\end{table}

\subsection{Preserving RGB Details}
\label{sec:rgb-detail-preservation}

On the 295 source images in RGBA-Image, we find that RGBA-trained VAEs exhibit lower RGB reconstruction fidelity than their corresponding pretrained backbone VAEs. Compared with the original Qwen-Image-Edit RGB VAE, OmniAlpha's RGBA VAE exhibits visible deviations in fine details, as illustrated in Figure~\ref{fig:vae-rgb-reconstruction}. Quantitatively, LPIPS increases from 0.03807 to 0.05197 and $\Delta E_{00}$ from 1.917 to 2.078. Figure~\ref{fig:i2rgba-rgb-fidelity} further shows that \method better preserves RGB details during RGBA generation.

\begin{figure}[tb]
\centering
\includegraphics[width=\linewidth]{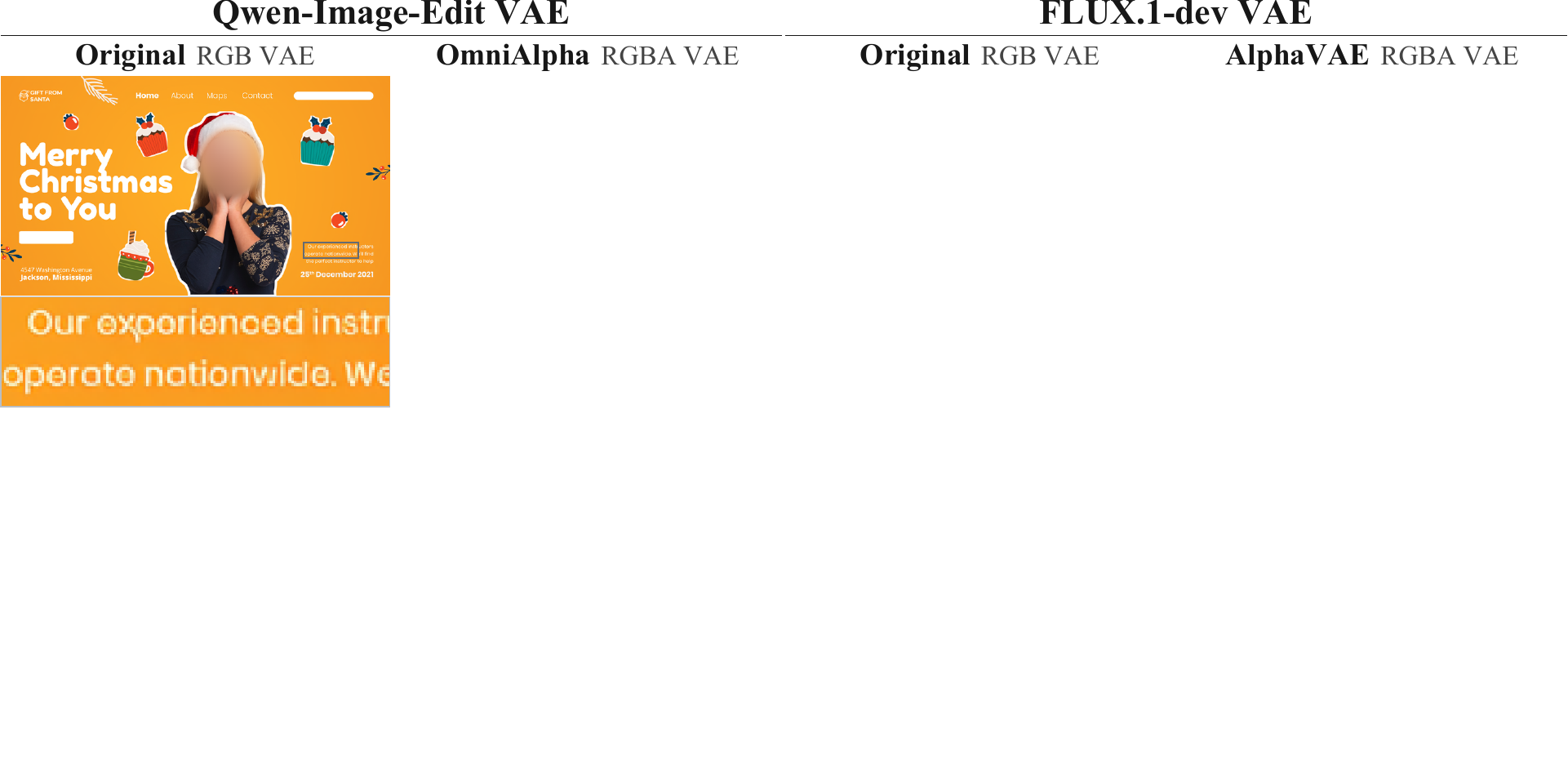}
\caption{RGB reconstruction with pretrained RGB VAEs and their RGBA counterparts. Original RGB VAEs preserve small lettering and colors more faithfully.}
\label{fig:vae-rgb-reconstruction}
\end{figure}

Figure~\ref{fig:rgb-backbone-generation} complements these decoder comparisons with a qualitative generation example using FLUX.1-dev. The original RGB model and \method both render a volumetric nut with clear fabric texture and metallic highlights, while AlphaVAE produces a flatter shape. This example illustrates that \method can retain fine material and structural details exhibited by the original RGB backbone during RGBA generation.

\begin{wrapfigure}{r}{0.43\textwidth}
\centering
\captionsetup{font=small,justification=raggedright,singlelinecheck=false}
\input{figures/rgb_backbone_comparison/figure.tex}
\end{wrapfigure}

We further ablate cross-branch feature sharing using a coupled decoder with additional alpha-to-RGB feedback while freezing all pretrained RGB parameters. As shown in Table~\ref{tab:coupled-decoder}, foreground RGB L1 increases from 0.1155 to 0.1349 on FLUX.2 and from 0.0746 to 0.2435 on Qwen-Image-Edit. These results show that alpha-to-RGB feedback could degrade foreground RGB fidelity, even when the pretrained RGB parameters remain frozen. Our unidirectional RGB-to-alpha design preserves the RGB decoding result for any fixed latent (Proposition~\ref{prop:triangular}).

\WFclear

\begin{figure}[b]
\centering
\includegraphics[width=0.65\linewidth]{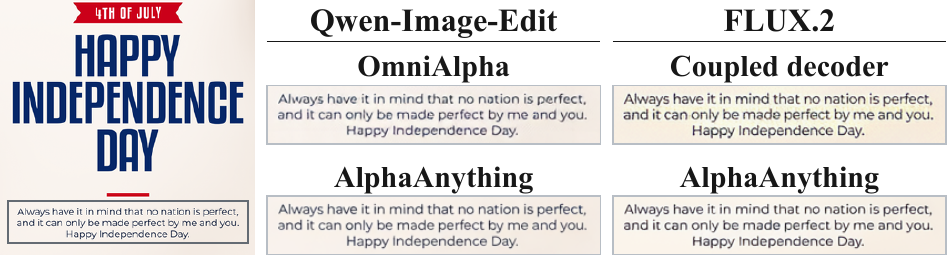}
\caption{RGB detail preservation. \method retains clearer letter shapes than OmniAlpha and the coupled decoder.}
\label{fig:i2rgba-rgb-fidelity}
\end{figure}

\subsection{Effect of Propagating Alpha Supervision to the DiT}
\label{sec:alpha-supervision}
We ablate gradient propagation from the alpha loss to the DiT by detaching \(\widehat z_0\) before alpha decoding while keeping all other training settings fixed. As shown in Table~\ref{tab:alpha-supervision}, allowing alpha-loss gradients to update the DiT leads to more accurate opacity prediction, better alpha boundary and connectivity structure, and improved composite fidelity on both backbones, reflected by higher Soft IoU and lower SAD, LPIPS, Alpha Grad, and Alpha Conn. These results support our central design that opacity supervision should train the generator, rather than being confined to the decoder-side alpha branch.

\begin{table}[t]
\centering
\caption{Coupled-decoder ablation on RGBA-Image.}
\label{tab:coupled-decoder}
\begingroup
\small
\setlength{\tabcolsep}{8pt}
\renewcommand{\arraystretch}{1.25}
\AlphaFitTable{%
\begin{tabular}{@{}lccc@{}}
\toprule
\AlphaTableHead{Decoder} & \AlphaTableHead{RGB L1\,$\downarrow$} & \AlphaTableHead{Alpha Grad\,$\downarrow$} & \AlphaTableHead{Alpha Conn\,$\downarrow$} \\
\midrule
\method FLUX.2 & \textbf{0.1155} & \textbf{69.5947} & \textbf{46.0840} \\
w/ Coupled VAE & 0.1349 & 70.1856 & 59.9648 \\
\midrule
\method Qwen & \textbf{0.0746} & \textbf{27.5221} & 30.0584 \\
w/ Coupled VAE & 0.2435 & 27.8330 & \textbf{28.1564} \\
\bottomrule
\end{tabular}
}
\endgroup

\end{table}

\begin{table}[tb]

\centering
\caption{Ablation on alpha supervision of DiT.}
\label{tab:alpha-supervision}

\small
\renewcommand{\arraystretch}{1.25}
\setlength{\tabcolsep}{8pt}
\AlphaFitTable{%
\begin{tabular}{@{}lccccc@{}}
\toprule
 & \AlphaTableHead{Soft IoU\,$\uparrow$} & \AlphaTableHead{SAD\,$\downarrow$} & \AlphaTableHead{LPIPS\,$\downarrow$} & \AlphaTableHead{Alpha Grad\,$\downarrow$} & \AlphaTableHead{Alpha Conn\,$\downarrow$} \\
\midrule
\method Flux.2 & \textbf{0.8296} & \textbf{46.3572} & \textbf{0.0207} & \textbf{69.5947} & \textbf{46.0840} \\
Stop-grad & 0.8146 & 59.8725 & 0.0248 & 72.3753 & 59.5553 \\
\midrule
\method Qwen & \textbf{0.9223} & \textbf{30.3820} & \textbf{0.0136} & \textbf{27.5221} & \textbf{30.0584} \\
Stop-grad & 0.9014 & 37.0486 & 0.0158 & 35.8853 & 36.5627 \\
\bottomrule
\end{tabular}%
}
\end{table}

\subsection{Effectiveness of the Endpoint-Aware Alpha Loss}
\label{sec:ea-loss-effectiveness}

On FLUX.2 and Qwen-Image-Edit, we compare the complete EA Loss with direct alpha \(L_1\) regression, retaining the same alpha-head architecture and the remaining optimization settings. As shown in Table~\ref{tab:alpha-loss}, EA Loss consistently achieves higher Soft IoU and lower SAD, LPIPS, Alpha Grad, and Alpha Conn on both backbones, with larger gains on Qwen-Image-Edit, demonstrating more effective alpha supervision.

\begin{table}[!htbp]

\centering
\caption{Ablation on EA loss and Alpha L1 loss.}
\label{tab:alpha-loss}

\small
\renewcommand{\arraystretch}{1.25}
\setlength{\tabcolsep}{8pt}
\AlphaFitTable{%
\begin{tabular}{@{}lccccc@{}}
\toprule
 & \AlphaTableHead{Soft IoU\,$\uparrow$} & \AlphaTableHead{SAD\,$\downarrow$} & \AlphaTableHead{LPIPS\,$\downarrow$} & \AlphaTableHead{Alpha Grad\,$\downarrow$} & \AlphaTableHead{Alpha Conn\,$\downarrow$} \\
\midrule
\method Flux.2 (EA) & \textbf{0.8296} & \textbf{46.3572} & \textbf{0.0207} & \textbf{69.5947} & \textbf{46.0840} \\
\method Flux ($L_1$) & 0.8179 & 53.2542 & 0.0226 & 69.9201 & 52.3830 \\
\midrule
\method Qwen (EA) & \textbf{0.9223} & \textbf{30.3820} & \textbf{0.0136} & \textbf{27.5221} & \textbf{30.0584} \\
\method Qwen ($L_1$) & 0.8830 & 49.2356 & 0.0222 & 38.0080 & 48.2089 \\
\bottomrule
\end{tabular}%
}
\end{table}

\section{Conclusions, Limitations and Future work}
\label{sec:discussion}

\method extends pretrained RGB generators to RGBA through generative training alone, without RGBA reconstruction pretraining or a separate alpha latent. Alpha supervision guides RGBA generation in the original RGB latent space, while unidirectional RGB-to-alpha feature sharing preserves the original RGB decoding process for any fixed latent. EA Loss jointly handles exact opacity endpoints, soft transparency, and localized errors within a single objective, outperforming standard alpha \(L_1\) loss in our experiments. Experiments across text-to-RGBA and image-to-RGBA tasks and multiple pretrained backbones show competitive RGBA performance, with improvements in aesthetic quality and text-image alignment.

High-quality RGBA data, widely accepted evaluation benchmarks and metrics are still scarce, which limits how broadly we can evaluate the method. Broader datasets and more comprehensive metrics that jointly assess appearance, opacity, and compositing quality are needed to further characterize data efficiency, generation quality, and generalization across RGBA generation tasks.

\clearpage
\appendix
\raggedbottom
\setlength{\textfloatsep}{14pt plus 2pt minus 2pt}
\setlength{\intextsep}{14pt plus 2pt minus 2pt}
\makeatletter
\setlength{\@fptop}{0pt}
\setlength{\@fpsep}{14pt plus 1pt minus 1pt}
\makeatother
\section{Theoretical Analysis}
\label{app:theory}

This appendix establishes the RGB preservation, gradient transmission, and opacity-supervision properties of the method in Section~\ref{sec:method}. We use $a=I_\alpha$, $u=\chi(a)$, $r=p(s)-u$, and $N=HW$. All square roots denote their nonnegative real values.

\subsection{RGB Preservation and Gradient Flow}
\label{app:triangular}

\begin{proof}[Proof of Proposition~\ref{prop:triangular}]
For a fixed latent $z$, the augmented and original RGB decoders start from the same feature $h_0$. If their RGB states agree at stage $\ell$, both apply the same frozen function $d_\ell$ in \eqref{eq:rgb-update}, without reading the alpha state or its parameters $\phi$. Induction gives identical RGB outputs at every stage. The final RGB output is therefore $D(z)$ for every $\phi$, and its derivative with respect to $\phi$ is zero wherever the derivative exists.
\end{proof}

The joint feature update has the block lower-triangular Jacobian
\begin{equation}
 \frac{\partial(h',q')}{\partial(h,q)}=
 \begin{pmatrix}J_{hh}&0\\J_{qh}&J_{qq}\end{pmatrix}.
 \label{eq:triangular}
\end{equation}
Products of these Jacobians retain the zero upper-right block. The lower-left blocks allow opacity supervision to reach the shared decoder input while preserving the RGB feature updates.

For the actual training path in \eqref{eq:flow}, the one-step estimate is $\widehat z_0=z_\sigma-\sigma\widehat v$, followed by $\overline z_0=\clip(\widehat z_0,-20,20)$. Let $K=\partial\overline z_0/\partial\widehat z_0$ be the diagonal clipping Jacobian and $J_s=\partial H_\phi(\overline z_0;D)/\partial\overline z_0$. Holding $z_\sigma$, $\sigma$, and conditioning fixed, the chain rule gives
\begin{equation}
 \nabla_{\widehat v}\big[w(\sigma)\ell_{\mathrm E\alpha}\big]
 =-\sigma w(\sigma)K^\top J_s^\top\nabla_s\ell_{\mathrm E\alpha}.
 \label{eq:flow-gradient}
\end{equation}
Unclipped latent coordinates have unit derivative through $K$, while clipped coordinates have zero derivative away from the clipping boundaries. The frozen decoder retains its derivatives with respect to its input, so the gradient in \eqref{eq:flow-gradient} propagates through the predicted velocity to the transformer LoRA parameters. This is the gradient path retained by the full model and removed by the stop-gradient ablation.

The sampled latent can change as LoRA is trained; Proposition~\ref{prop:triangular} preserves the original RGB decoding function at each fixed latent.

\subsection{Fisher--Rao Geometry of Opacity}
\label{app:geometry}

For $0<a<1$, the Bernoulli pair $(1-a,a)$ has log-likelihood score $(Y-a)/[a(1-a)]$. Its second moment, using $\Var(Y)=a(1-a)$, gives
\begin{equation}
 \mathcal I(a)=\frac{1}{a(1-a)},\qquad
 {\rm d}\ell^2=\frac{{\rm d}a^2}{a(1-a)}.
 \label{eq:fisher-metric}
\end{equation}
Integrating this line element and normalizing by the total arc length gives the coordinate used in \eqref{eq:fisher-coordinate}:
\begin{equation}
 \chi(a)=\frac1\pi\int_0^a\frac{{\rm d}t}{\sqrt{t(1-t)}}
 =\frac2\pi\arcsin\sqrt a.
\end{equation}
For any differentiable interior path, its length is $\pi\int|{\rm d}\chi(a(t))/{\rm d}t|\,{\rm d}t$. A monotone path attains the minimum, so
\begin{equation}
 d_{\rm FR}(a,b)=\pi|\chi(a)-\chi(b)|.
 \label{eq:fisher-distance}
\end{equation}
Endpoint limits extend this distance continuously to $[0,1]$, with $d_{\rm FR}(0,1)=\pi$. The inverse $f(u)=\sin^2(\pi u/2)$ is a continuous bijection on $[0,1]$.

For an interior target, $\chi'(a)=1/[\pi\sqrt{a(1-a)}]$. At zero, expanding $\arcsin x=x+O(x^3)$ gives
\begin{align}
 [\chi(a+\delta)-\chi(a)]^2
 &=\frac{\delta^2}{\pi^2a(1-a)}+o(\delta^2),
 \qquad 0<a<1,\notag\\
 \chi(\delta)^2&=\frac4{\pi^2}\delta+O(\delta^2),
 \qquad\delta\downarrow0.
 \label{eq:fisher-local}
\end{align}
The identity $\chi(1-a)=1-\chi(a)$ gives the corresponding expansion near opacity one. Thus the squared coordinate distance has first-order cost for small deviations from either opacity endpoint and remains locally quadratic at interior targets, as used in the endpoint-sensitive supervision of Section~\ref{sec:fisher}.

\subsection{Exact Endpoints and Corrective Gradients}
\label{app:fy}

For $\Omega(u)=u^2-u+\iota_{[0,1]}(u)$, the convex conjugate is
\begin{equation}
 \Omega^*(s)=\sup_{v\in[0,1]}\{sv-v^2+v\}
 =\begin{cases}
 0,&s\leq-1,\\
 (s+1)^2/4,&-1<s<1,\\
 s,&s\geq1.
 \end{cases}
 \label{eq:conjugate}
\end{equation}
The unconstrained maximizer is $(s+1)/2$; restricting it to $[0,1]$ gives the map $p(s)$ defined in Section~\ref{sec:fisher}. The conjugate values and first derivatives agree at both joins, and $(\Omega^*)'(s)=p(s)$. Substitution into the Fenchel--Young gap
\begin{equation}
 g(s,u)=\Omega^*(s)+u^2-u-su
 \label{eq:fy-conjugate}
\end{equation}
recovers exactly the three branches used in \eqref{eq:fy-gap}.

\begin{proposition}[Endpoint-compatible coordinate supervision]
\label{prop:fy}
For $u\in[0,1]$ and finite $s$,
\begin{equation}
 g(s,u)\geq[p(s)-u]^2\geq0,\qquad
 g(s,u)=0\ \Longleftrightarrow\ p(s)=u,
 \label{eq:fy-bound}
\end{equation}
and $\partial g(s,u)/\partial s=p(s)-u$.
\end{proposition}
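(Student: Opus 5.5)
The plan is to work directly from the Fenchel--Young representation \eqref{eq:fy-conjugate} together with the conjugate \eqref{eq:conjugate}, treating the three branches of $s$ separately.

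\textbf{Step 1: the lower bound.} Since $p(s)\in[0,1]$ attains the supremum in \eqref{eq:conjugate}, we have $\Omega^*(s)=s\,p(s)-p(s)^2+p(s)$. Substituting this into \eqref{eq:fy-conjugate} and rearranging gives
\begin{equation*}
g(s,u)-[p(s)-u]^2=(s+1-2p(s))\,(p(s)-u).
\end{equation*}
On the interior branch $-1<s<1$ we have $2p(s)=s+1$, so the right side vanishes and $g$ is exactly the squared error. For $s\le-1$ we have $p=0$, and the product is $(s+1)(-u)\ge0$ because $s+1\le0$ and $u\ge0$. For $s\ge1$ we have $p=1$, and the product is $(s-1)(1-u)\ge0$. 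This proves $g\ge[p(s)-u]^2\ge0$. Equivalently, this is the strong-convexity form of the Fenchel--Young inequality, since $\Omega$ is $1$-strongly convex with modulus $2$.

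\textbf{Step 2: the zero set.} If $g=0$, the lower bound forces $p(s)=u$. Conversely, suppose $p(s)=u$. Then the identity in Step 1 gives $g=0$ directly, because the product term contains the factor $p(s)-u=0$.

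\textbf{Step 3: the derivative.} Since $u^2-u-su$ is affine in $s$, we get $\partial g/\partial s=(\Omega^*)'(s)-u$. It therefore suffices to check that $\Omega^*$ is $C^1$ with $(\Omega^*)'=p$. Differentiating each branch gives $0$, $(s+1)/2$, and $1$. At the joins $s=\pm1$, the one-sided values agree ($0$ and $1$), and so do the one-sided derivatives. Hence $(\Omega^*)'=p$ everywhere and $\partial g/\partial s=p(s)-u$. This can also be read off directly from the explicit branches in \eqref{eq:fy-gap}: their derivatives are $-u$, $(s+1)/2-u$, and $1-u$.

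\textbf{Main obstacle.} The only delicate point is the sign bookkeeping in the saturated branches and the differentiability at the joins $s=\pm1$. The factorization in Step 1 handles the sign issue uniformly, and continuity of $p$ handles the joins.
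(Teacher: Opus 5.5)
Your proof is correct and follows essentially the paper's argument: your factorization $g-(p-u)^2=(s+1-2p(s))(p(s)-u)$ vanishes on $-1<s<1$ and reduces on the saturated branches to exactly the paper's residuals $u(-1-s)$ and $(1-u)(s-1)$, and your zero-set argument and your derivative check at $s=\pm1$ match the paper's as well. The only difference is cosmetic: the shared factor $p(s)-u$ makes sufficiency of $p(s)=u$ immediate, whereas the paper substitutes into the relevant branch.
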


\begin{proof}
For $s\in[-1,1]$, the gap is exactly $(p-u)^2$. Outside this interval,
\begin{equation}
 g(s,u)-(p-u)^2=
 \begin{cases}
 u(-1-s),&s\leq-1,\\
 (1-u)(s-1),&s\geq1,
 \end{cases}
\end{equation}
which is nonnegative. Differentiating the three branches gives $-u$, $(s+1)/2-u$, and $1-u$; these derivatives agree at the joins and equal $p-u$. The lower bound makes $p=u$ necessary for zero gap, and substitution into the corresponding branch proves sufficiency.
\end{proof}

A transparent target has zero gap at every $s\leq-1$, and an opaque target has zero gap at every $s\geq1$. Every soft target has an exact fit at $s=2u-1$. At an incorrect transparent prediction the score derivative is $-u<0$, and at an incorrect opaque prediction it is $1-u>0$. The loss therefore supports finite-score exact endpoints while retaining a corrective derivative under incorrect saturation.

The implementation evaluates the nonnegative branches in \eqref{eq:fy-gap} directly and computes targets and reductions in FP32. The equivalent expression $\chi(a)=(2/\pi)\operatorname{atan2}(\sqrt a,\sqrt{1-a})$ evaluates the target coordinate at both endpoints.

\subsection{Root Aggregation and Error Bounds}
\label{app:root}

\begin{proposition}[Root aggregation and corrective gradients]
\label{prop:root}
For the actual roots in \eqref{eq:root-risk}, let $M=N^{-1}\sum_i g(s_i,u_i)$. Then
\begin{equation}
 \sqrt M\leq\ell_{\mathrm E\alpha}\leq\sqrt{2M},\qquad
 \ell_{\mathrm E\alpha}=0\ \Longleftrightarrow\ \widehat a=a.
 \label{eq:root-bounds}
\end{equation}
Where both roots are positive,
\begin{equation}
 \frac{\partial\ell_{\mathrm E\alpha}}{\partial s_i}
 =\frac{r_i}{2N}\left(\frac{1-a_i}{R_0}+\frac{a_i}{R_1}\right).
 \label{eq:root-gradient}
\end{equation}
Selecting the zero generalized gradient for an exactly zero root gives
\begin{equation}
 r_i\frac{\partial\ell_{\mathrm E\alpha}}{\partial s_i}\geq0.
 \label{eq:corrective-gradient}
\end{equation}
\end{proposition}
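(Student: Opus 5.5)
We set $A=N^{-1}\sum_i(1-a_i)g_i$ and $B=N^{-1}\sum_i a_ig_i$, so that $R_0=\sqrt A$, $R_1=\sqrt B$, and $A+B=M$ because $(1-a_i)+a_i=1$. Proposition~\ref{prop:fy} gives $g_i\ge0$, and $a_i\in[0,1]$, so both $A$ and $B$ are nonnegative.

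\textbf{Sandwich bounds.} These follow from elementary inequalities for two nonnegative reals. For the lower bound, $(\sqrt A+\sqrt B)^2=A+B+2\sqrt{AB}\ge M$. For the upper bound, Cauchy--Schwarz (equivalently, concavity of the square root) gives $(\sqrt A+\sqrt B)^2\le 2(A+B)=2M$. Taking nonnegative roots yields $\sqrt M\le\ell_{\mathrm E\alpha}\le\sqrt{2M}$.

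\textbf{Zero characterization.} By the bounds, $\ell_{\mathrm E\alpha}=0$ if and only if $M=0$. Since every $g_i\ge0$, this happens if and only if $g_i=0$ for all $i$. By Proposition~\ref{prop:fy}, $g_i=0$ is equivalent to $p(s_i)=u_i=\chi(a_i)$. Because $f$ is a bijection on $[0,1]$ with inverse $\chi$ (Appendix~\ref{app:geometry}), this is equivalent to $\widehat a_i=f(p(s_i))=a_i$.

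\textbf{Gradient where both roots are positive.} Here $\sqrt{\cdot}$ is smooth at $A$ and $B$, so the chain rule applies. We have $\partial R_0/\partial s_i=\frac{1}{2R_0}\cdot\frac{1-a_i}{N}\,\partial g_i/\partial s_i$, and similarly for $R_1$ with weight $a_i$. Proposition~\ref{prop:fy} gives $\partial g_i/\partial s_i=p(s_i)-u_i=r_i$, and the only dependence of $A$ and $B$ on $s_i$ is through $g_i$. Summing the two terms gives \eqref{eq:root-gradient}.

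\textbf{Sign of the gradient.} In the both-positive case, $r_i\,\partial\ell/\partial s_i=\frac{r_i^2}{2N}\big(\frac{1-a_i}{R_0}+\frac{a_i}{R_1}\big)\ge0$. This is the main obstacle, since the roots can vanish. If a root is zero, say $R_0=0$, its selected generalized gradient is $0$ by convention, so only the other term remains. That term is either $0$ or $\frac{r_i^2 a_i}{2NR_1}\ge0$, and the same holds symmetrically if $R_1=0$. In every case the selected gradient is a sum of nonnegative multiples of $r_i^2$, so \eqref{eq:corrective-gradient} holds.

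One subtlety must be noted: $g_i$ is differentiable in $s$ everywhere, since Proposition~\ref{prop:fy} states that its branch derivatives match at the joins. The only non-smoothness therefore comes from the roots, and the chain rule is valid whenever a root is positive.
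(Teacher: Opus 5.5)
Your proof is correct and follows the paper's argument essentially step for step: the identity $R_0^2+R_1^2=M$ with $0\le 2R_0R_1\le M$, the zero characterization via nonnegative weights summing to one together with Proposition~\ref{prop:fy} and the bijectivity of $f$, and the chain-rule derivative whose sign follows from the nonnegative weights. The only thing the paper adds is a justification that choosing zero at an exactly vanishing root is legitimate: the root is locally Lipschitz, being locally a norm of affine or one-sided endpoint residuals, and it attains its minimum there, so zero is an admissible generalized gradient. You instead take this choice as a convention, which the statement's wording permits.
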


\begin{proof}
The identity $R_0^2+R_1^2=M$ gives $(R_0+R_1)^2=M+2R_0R_1\in[M,2M]$. A zero sum of the roots implies every gap is zero, since their per-pixel weights sum to one. Proposition~\ref{prop:fy} and the bijection $f$ then give $\widehat a=a$; the converse follows by substitution.

Write $b_{0i}=1-a_i$ and $b_{1i}=a_i$. For $R_k>0$, differentiation gives
\begin{equation}
 \frac{\partial R_k}{\partial s_i}=\frac{b_{ki}r_i}{2NR_k}.
\end{equation}
Summing the two root derivatives proves \eqref{eq:root-gradient}. The weights are nonnegative, so multiplication by $r_i$ establishes \eqref{eq:corrective-gradient}.

At a zero root, all positively weighted gaps and residuals vanish. Its local expression is a norm of affine or one-sided endpoint residuals, or is identically zero. It is locally Lipschitz, and zero is an admissible generalized gradient at its minimum. The selected zero contribution preserves the corrective sign without evaluating $0/0$.
\end{proof}

\paragraph{Contribution of localized errors.}
For a binary target, suppose all nonzero gaps lie in a set $S$ within one endpoint group $k\in\{0,1\}$, so $a_i=k$ for $i\in S$. Let $m=|S|>0$, $\rho=m/N$, and $\bar g=m^{-1}\sum_{i\in S}g_i$. Then
\begin{equation}
 R_k^2=\frac1N\sum_{i\in S}g_i=\rho\bar g,\qquad
 R_{1-k}=0,\qquad
 \ell_{\mathrm E\alpha}=\sqrt{\rho\bar g},\qquad M=\rho\bar g.
 \label{eq:localized-root}
\end{equation}
For fixed $\bar g$, the root contribution therefore decreases as $\sqrt{\rho}$, while direct averaging of the gaps decreases as $\rho$. This identity applies to both saturated and unsaturated scores and establishes the area dependence described in Section~\ref{sec:fisher}.

\paragraph{Opacity and compositing error.}
\label{app:compositing}
The inverse coordinate satisfies $|f'(u)|\leq\pi/2$, giving $|\widehat a_i-a_i|\leq(\pi/2)|r_i|$. Consequently,
\begin{equation}
 \frac1N\sum_i(\widehat a_i-a_i)^2
 \leq\frac{\pi^2}{4N}\sum_i r_i^2
 \leq\frac{\pi^2}{4}M
 \leq\frac{\pi^2}{4}\ell_{\mathrm E\alpha}^2.
 \label{eq:alpha-error-bound}
\end{equation}
Let $F=I_c$ and assume $F,\widehat F,B\in[0,1]^{N\times3}$ in the common linear color space of \eqref{eq:compositing}. The exact decomposition
\begin{equation}
 C_B(\widehat F,\widehat a)-C_B(F,a)
 =a\odot(\widehat F-F)+(\widehat a-a)\odot(\widehat F-B)
\end{equation}
and $|\widehat F-B|\leq1$ channelwise imply
\begin{equation}
 \frac{\|C_B(\widehat F,\widehat a)-C_B(F,a)\|_2}{\sqrt{3N}}
 \leq\sqrt{\frac{\sum_i a_i^2\|\widehat F_i-F_i\|_2^2}{3N}}
 +\frac\pi2\ell_{\mathrm E\alpha}.
 \label{eq:composite-bound}
\end{equation}
This bound separates foreground RGB error from the opacity contribution controlled by EA Loss and holds for every background in the stated range.

\section{Experimental Protocol}
\label{app:protocol}

This appendix specifies the implementation, training, evaluation, and ablation settings used in Section~\ref{sec:experiments}. All variants use the objective in \eqref{eq:total-loss}, with the stated substitution for the alpha-$L_1$ ablation. Additional numerical results and visual comparisons are reported in Appendices~\ref{app:quantitative} and~\ref{app:qualitative}.

\subsection{Implementation Details}
\label{app:implementation}

\paragraph{RGB targets and native backbone interfaces.}
The color-padding operation in Section~\ref{sec:preliminary} follows the multiscale convention of LayerDiffuse~\citep{zhang2024layerdiffuse}.  We form a pyramid of premultiplied color $I_p=aI_c$ and alpha, initialize the coarsest color from an alpha-weighted mean, and propagate color from coarse to fine.  Each scale adds its premultiplied color to the upsampled coarse color multiplied by remaining transparency.  The procedure supplies a dense three-channel training target without relying on arbitrary stored RGB values in fully transparent regions.  The frozen encoder uses its posterior mode for FLUX and Qwen, while LongCat follows its native training convention of posterior sampling.  Each backbone retains its native convolution dimensionality, feature widths, latent scaling, and token packing.  Conditioning tokens remain clean when required by its image-conditioning interface.

\paragraph{Spatial feature injections.}
At decoder stage $\ell$ with $m_\ell$ RGB feature injections, feature $h_{\ell j}$ updates the aligned alpha state by
\begin{equation}
 q\leftarrow q+\frac{\gamma_\ell}{\sqrt{m_\ell}}\,
       \frac{\eta_\ell}{r_\ell}
       B_{\ell j}^{3\times3}A_{\ell j}^{1\times1}h_{\ell j}.
 \label{eq:injection}
\end{equation}
The learned projection $A_{\ell j}$ reduces RGB channels to rank $r_\ell$, and $B_{\ell j}$ performs spatial mixing into the alpha width.  The factor $\eta_\ell/r_\ell$ is the low-rank projection scaling; $1/\sqrt{m_\ell}$ normalizes the accumulated input by injection count; and the learned stage gate $\gamma_\ell$ controls its contribution.  Channel transitions, residual blocks, and learned upsampling align the alpha state with each RGB stage.

\paragraph{Alpha-branch configuration.}
The five alpha stages use widths $(256,256,192,128,96)$ from the bottleneck toward the RGB output, with two residual blocks per stage.  All RGB-to-alpha projections use rank $r_\ell=64$ and scaling parameter $\eta_\ell=64$.  The learned stage gates are initialized to $(1,0.75,0.5,0.25,0.25)$, and injection-count normalization is enabled.  Bottleneck cross-attention uses eight heads, each with 32 channels.  The final spatial score head has 128 hidden channels.  Dropout is zero throughout the alpha branch.

\paragraph{Initialization.}
Alpha residual outputs and the bottleneck cross-attention output projection are zero initialized.  Spatial injection output projections and the final score convolution use random initialization with standard deviation $10^{-4}$.  This starts residual refinement from the identity while providing trainable feature and score paths.

\subsection{Training and Inference Settings}
\label{app:training-settings}

Text-to-RGBA evaluation uses RGBA-Text, AIM-500~\citep{li2021automaticmatting}, and AlphaTest from AlphaVAE~\citep{wang2025alphavae}. For AIM-500, we use the text prompts released with AlphaVAE. Image-to-RGBA evaluation uses RGBA-Image. Prompt lists, source groups, aspect-ratio policy, conditioning protocol, and filtering rules are fixed within each backbone.

The text-to-RGBA comparison uses FLUX.1-dev, FLUX.2 Klein Base 9B, Qwen-Image-2512, and LongCat-Image; the image-to-RGBA comparison uses FLUX.2 Klein Base 9B and Qwen-Image-Edit-2511.  Each backbone keeps its native conditioning interface, latent layout, encoding, scaling, token packing, and sampler.

\paragraph{Training data and preprocessing.}
Text-to-RGBA training uses 8,963 curated RGBA assets generated using an image generation model.  Image-to-RGBA training uses $\sim$ 180K source-image-target-layer pairs from PrismLayersPro~\citep{chen2025prismlayers}, with corrected English extraction instructions.  Images are processed at a nominal resolution of 1024 using aspect-ratio buckets; the training examples in these runs occupy the $1024\times1024$ bucket.  Center cropping and random horizontal flipping are disabled.  Image-conditioned source and target pairs receive aligned spatial preprocessing.  The RGB target uses color padding, while the original target alpha supplies opacity supervision.

\paragraph{Optimization.}
All models are trained for 20,000 optimizer updates with a per-device batch size of one and one gradient-accumulation step.  The main experiments use 16 GPUs, giving an effective global batch size of 16.  We use AdamW~\citep{loshchilov2019adamw} with learning rate $10^{-4}$ for both the transformer LoRA and alpha branch, $(\beta_1,\beta_2)=(0.9,0.999)$, weight decay $0.01$, and $\epsilon=10^{-8}$.  The learning rate follows cosine decay after 100 warmup updates, and the global gradient norm is clipped at 1.0.  Training uses bfloat16 mixed precision and transformer activation checkpointing.  The original transformer, RGB VAE, and text-encoder parameters remain frozen.  The flow and alpha terms both have coefficient 1.  Alpha supervision uses the time weight in \eqref{eq:total-loss}, with reference noise level $0.5$ and denominator stabilizer $10^{-4}$; the one-step clean-latent estimate is clipped to $[-20,20]$ before decoding.

\paragraph{Transformer LoRA modules.}
All transformer LoRA modules use rank 64, scaling parameter 128, Gaussian initialization, and zero dropout.  Qwen and LongCat apply LoRA to the attention query, key, value, and output projections.  FLUX.1 additionally applies LoRA to the context-stream attention projections and the input and output feed-forward projections in both image and context streams.  FLUX.2 applies LoRA to the attention projections together with the fused query/key/value--MLP input projection and output projection of its single-stream blocks.

\paragraph{Conditioning and noise sampling.}
For text-to-RGBA, explicit and implicit caption variants are selected with equal probability; within either form, the short caption is selected with probability 0.2.  Qwen, FLUX.2, and LongCat drop text conditioning with probability 0.1; image-conditioned training retains the reference image when its text is dropped.  FLUX.1 uses its native guidance embedding with training guidance 1.0.  Qwen and FLUX.2 uniformly sample valid entries of a 1,000-step flow schedule with shift 3, restricting scheduler timesteps to $[1,999]$.  FLUX.1 samples from its native shifted schedule over the same timestep range.  LongCat samples $\sigma=\operatorname{sigmoid}(\xi)$ with $\xi\sim\mathcal N(0,1)$ and then applies the backbone's resolution-dependent time shift.  All backbones use the linear flow path and velocity target in \eqref{eq:flow-path}.  Controlled comparisons keep the sampler, guidance settings, inference seed, conditioning inputs, and sampling-step count fixed.

\paragraph{Inference settings.}
The text-to-RGBA comparisons use the corresponding 20,000-step \method checkpoints. FLUX.1 uses 28 sampling steps and guidance 3.5; FLUX.2, Qwen, and LongCat use 50 steps and guidance 4. The baselines use their recorded inference settings: 25 steps and guidance 7 for LayerDiffuse with SDXL~\citep{podell2024sdxl}, 28 steps and guidance 3.5 for AlphaVAE, 50 steps and guidance 4 for OmniAlpha, and 40 steps and guidance 1.0 for Qwen-Image-2.1. For Qwen-Image-2.1 only, we additionally wrap the shared content prompt in its official RGBA transparency template; using the content prompt alone produced largely opaque outputs ($\alpha\approx1$). The main comparisons use inference base seed 42; Appendix~\ref{app:multi-seed} additionally evaluates the fixed checkpoints with base seeds 43 and 44.

\subsection{Construction of RGBA-Text and RGBA-Image}
\label{app:dataset-construction}

We construct RGBA-Text and RGBA-Image for evaluation only.  Neither set is used to train the \method models.

\paragraph{RGBA-Text.}
RGBA-Text contains 294 English prompts covering nine categories, with the distribution in Table~\ref{tab:rgba-text-categories}.  We follow the training set's category taxonomy while composing new combinations of subjects, materials, colors, poses, and visual styles.  The set spans lettering and graphic assets, natural subjects with detailed boundaries, visual effects, and transparent or translucent materials.  Prompts are selected for content coverage, clarity, and distinctiveness, without imposing equal category sizes or fixed description lengths.  They range from 10 to 95 words, with a median of 53 words; prompts involving visible lettering use English text.

We check both the main prompts and their short summaries against all nonempty training caption variants.  The curation combines normalized exact-text matching, shared 12-word sequence checks, and manual review of TF--IDF nearest neighbors.  Descriptions that too closely resemble training captions are revised.  The final set contains no normalized exact duplicates within the test prompts or against the training text, and no shared contiguous 12-word sequences with training captions.  RGBA-Text contains prompts without paired reference images, so we report reference-free scores on this set.

\begin{table}[!htbp]

\centering
\caption{Category distribution of RGBA-Text.}
\label{tab:rgba-text-categories}

\small
\renewcommand{\arraystretch}{1.2}
\setlength{\tabcolsep}{8pt}
\AlphaFitTable{%
\begin{tabular}{@{}l>{\raggedright\arraybackslash}m{0.60\linewidth}c@{}}
\toprule
\AlphaTableHead{Category} & \AlphaTableHead{Content} & \AlphaTableHead{Prompts} \\
\midrule
Advertising assets & Lettering, titles, labels, buttons, and promotional decorations & 33 \\
Animals & Animal subjects & 33 \\
Visual effects & Light, electricity, smoke, particles, and liquids & 32 \\
Graphics & Decorative shapes, print motifs, and craft textures & 27 \\
Objects & Utensils, tools, and crafted objects & 27 \\
People & People, actions, and interactions & 40 \\
Plants & Flowers, foliage, fruit, and seeds & 29 \\
Products & Product and device presentations & 32 \\
Translucent materials & Transparent or translucent materials, including combinations with opaque parts & 41 \\
\midrule
\textbf{Total} & & \textbf{294} \\
\bottomrule
\end{tabular}%
}
\par
\end{table}

\paragraph{RGBA-Image source images.}
RGBA-Image contains 735 image-conditioned examples from 295 source images.  Each example includes a source image, a target RGBA layer, an English layer description, and an extraction instruction.  The source images come from two pools: selected GenPoster100K posters and images synthesized using an image generation and editing model.  Table~\ref{tab:rgba-image-sources} reports the final distribution by source and input-image category; multiple target layers can share one source image.

For the GenPoster100K pool~\citep{Wang_2025_ICCV}, we first filter the released metadata for adequate resolution, valid layouts, and usable text.  Candidates must have a short side of at least 600 pixels, at least 0.7 megapixels, an aspect ratio between 0.45 and 2.20, and 2--24 annotated layout elements.  We reject candidates with malformed layouts, CJK text, or template placeholders, and retain at most one candidate per original design file.  Remaining candidates are ranked using layout metadata, CLIP-based scores~\citep{radford2021clip}, LAION aesthetic scores, MUSIQ~\citep{Ke_2021_ICCV}, and image-quality statistics, followed by visual near-duplicate filtering and OCR screening.  Full-resolution OCR removes remaining CJK text and placeholder copy, leaving 143 posters for layer extraction.

For the synthetic-image pool, we generate 200 source images with the image generation and editing model: 100 stylized design scenes and 100 photorealistic scenes.  Scene prompts vary subject combinations, human presence, spatial layout, camera viewpoint, palette, lighting, and rendering style.  The final retained inputs comprise 99 stylized scenes and 86 photorealistic scenes, contributing 261 and 217 target layers, respectively.

\paragraph{Layer construction, annotation, and manual review.}
We use the same image generation and editing model to construct RGBA reference layers for both source pools.  Posters are decomposed into at most six layers.  For the generated scenes, a vision-language model first identifies one to three visually unambiguous semantic targets per image, and the image model extracts these targets jointly as separate layers.  Reference layers are placed on the source image's canvas, preserving its dimensions and the targets' spatial alignment.  These are generated reference layers that undergo manual filtering.

We use the vision-language model to annotate or re-annotate English descriptions and extraction instructions, followed by manual review against the source images and target RGBA layers.  Descriptions capture the target's visible appearance and spatial relationships; instructions identify the same target and its intended semantic grouping.  Wording and length vary with target complexity.  Review corrects ambiguous targets, mismatched descriptions, and incomplete grouping instructions, including text panels, frames, and attached objects when they belong to the accepted layer.  Each candidate layer is inspected alongside its source image and alpha matte for alpha accuracy, alignment with the description and extraction instruction, and color fidelity.  Layers with inaccurate opacity, prompt mismatches, or visible color deviations are rejected.  The final evaluation set contains 257 poster layers from 110 posters and 478 generated-scene layers from 185 images, for a total of 735 layers from 295 source images.  We use these manually screened reference layers for evaluation, although generative extraction does not guarantee exact pixelwise reconstruction of the source content.

\begin{table}[!htbp]

\centering
\caption{Sources and input categories of RGBA-Image.}
\label{tab:rgba-image-sources}

\small
\renewcommand{\arraystretch}{1.2}
\setlength{\tabcolsep}{8pt}
\AlphaFitTable{%
\begin{tabular}{@{}llcc@{}}
\toprule
\AlphaTableHead{Source} & \AlphaTableHead{Input category} & \AlphaTableHead{Images} & \AlphaTableHead{Examples} \\
\midrule
GenPoster100K & Graphic-design posters & 110 & 257 \\
Synthetic images & Stylized design scenes & 99 & 261 \\
Synthetic images & Photorealistic scenes & 86 & 217 \\
\midrule
\textbf{Total} & & \textbf{295} & \textbf{735} \\
\bottomrule
\end{tabular}%
}
\par
\end{table}

\subsection{Evaluation Protocols and Metrics}
\label{app:evaluation}

\subsubsection{Text-to-RGBA Evaluation}
\label{app:t2rgba-metrics}

We compare against AlphaVAE~\citep{wang2025alphavae}, LayerDiffuse~\citep{zhang2024layerdiffuse}, OmniAlpha~\citep{yu2026omnialpha}, and Qwen-Image-2.1. For AIM-500~\citep{li2021automaticmatting} and AlphaTest, we use the benchmark prompts provided by AlphaVAE. Evaluation uses FID~\citep{heusel2017gans}, CLIPScore~\citep{hessel2021clipscore}, Aesthetic Score~\citep{schuhmann2022laion}, and ImageReward~\citep{xu2023imagereward}. The main-text results are in Tables~\ref{tab:t2rgba-manual} and~\ref{tab:t2rgba-alphatest}; the additional AIM500 comparison is in Appendix~\ref{app:t2rgba-aim500}.

The four appearance metrics are computed separately on lossless black-background and white-background composites, then averaged across the two backgrounds.  CLIPScore uses CLIP ViT-B/32~\citep{radford2021clip} with the caption prefix ``A photo depicts'' and a weight of $2.5$ on the nonnegative text--image cosine similarity.  Aesthetic Score uses the LAION improved aesthetic predictor\footnote{\url{https://github.com/christophschuhmann/improved-aesthetic-predictor}} with CLIP ViT-L/14 and the released \texttt{sac+logos+ava1-l14-linearMSE} weights, and ImageReward uses the original prompt.  Each benchmark is evaluated separately: sample-level scores are averaged uniformly, and FID is computed for each dataset and background before averaging the two values.  Table~\ref{tab:t2rgba-manual} evaluates all 294 RGBA-Text prompts using CLIPScore, Aesthetic Score, and ImageReward. This set has no reference images for FID.

\subsubsection{Image-to-RGBA Evaluation}
\label{app:i2rgba-metrics}

Table~\ref{tab:i2rgba-manual} evaluates RGBA-Image, whose 735 examples each pair a source image and target description with a target RGBA layer.  The RGBA-Image comparison includes OmniAlpha, Qwen-Image-2.1, and the two \method backbones.  Within each inference run, each method contributes one prediction per example, and all six metrics are averaged uniformly over the 735 examples.

For a fair comparison, \method uses exactly the same prompts as OmniAlpha. Each prompt consists of the prefix ``Extract the object described by the text, preserving fine details and transparency:'' followed by the target description. We use these prompts for Qwen-Image-2.1 as well. Figure~\ref{fig:i2rgba-additional} displays each target description, with the shared prefix stated in the caption.

For the six reference-based metrics, predictions are resized to the GT resolution with RGBA LANCZOS resampling, using the paired target RGBA layers in RGBA-Image.  RGB L1 is the mean RGB-channel error at pixels with nonzero GT alpha.  Soft IoU and SAD use normalized alpha values in $[0,1]$ over the whole image.  Soft IoU is $\sum_i\min(\widehat a_i,a_i)/\sum_i\max(\widehat a_i,a_i)$, with a value of zero for an empty union; SAD is $\sum_i|\widehat a_i-a_i|/1{,}000$.

Alpha Grad and Alpha Conn follow the matting benchmark of \citet{rhemann2009mattingbenchmark}, using implementations equivalent to MMEval 0.2.1.\footnote{\url{https://github.com/open-mmlab/mmeval/tree/v0.2.1}} Alpha Grad measures squared differences between gradient magnitudes after independently min--max normalizing each alpha map. We use Gaussian-derivative gradients with $\sigma=1.4$ and divide the summed squared error by $1{,}000$. Alpha Conn measures connectivity disagreement using opacity thresholds in steps of $0.1$, the largest four-connected component of each prediction--reference intersection, and a $0.15$ opacity-difference cutoff in the connectivity map. The summed absolute difference between the connectivity maps is divided by $1{,}000$. Both metrics use float64 arithmetic over the whole image at GT resolution.

LPIPS is computed separately on black and white composites using the stored RGB values, without intermediate image quantization, and then averaged across backgrounds.  It uses the official AlexNet network with version 0.1 and bilinear antialiased resizing to a maximum image side of 512 pixels, without upscaling.  Higher Soft IoU and lower RGB L1, SAD, LPIPS, Alpha Grad, and Alpha Conn indicate better agreement with the reference.

\subsubsection{RGB Reconstruction Evaluation}
\label{app:vae-rgb-protocol}

We compare the RGBA VAEs of OmniAlpha and AlphaVAE with their original Qwen-Image-Edit-2509 and FLUX.1-dev RGB VAEs, respectively.  The evaluation uses all 295 distinct source images in RGBA-Image, counting each input once regardless of how many target layers it has.  Inputs are resized with LANCZOS to a maximum side length of 1024 pixels while preserving aspect ratios, without upscaling.  All models receive the same inputs, edge-padded to multiples of 64; the padding is removed before scoring.  We use deterministic float32 encode--decode with the posterior mode and supply fully opaque alpha to the RGBA encoders.  Decoded RGB values are clipped to $[0,1]$ and evaluated directly, discarding the reconstructed alpha without compositing.

We evaluate RGB reconstruction using LPIPS~\citep{zhang2018perceptual} and CIEDE2000 color difference, $\Delta E_{00}$~\citep{sharma2005ciede2000}. LPIPS uses AlexNet v0.1 on floating-point reconstructions before PNG quantization, at the full evaluation resolution. Both metrics are computed over complete images and averaged equally across inputs. The enlarged regions in Figure~\ref{fig:vae-rgb-reconstruction} illustrate local details within these full-image reconstructions. Results are reported in Appendix~\ref{app:vae-rgb-fidelity}.

\paragraph{Computation of $\Delta E_{00}$.}
We read the saved 8-bit reference and reconstruction PNGs, retain their RGB channels, and divide by 255 to obtain sRGB values in $[0,1]$. The images retain the evaluation resolution above; no additional resizing, region selection, or alpha weighting is applied. We convert each pixel to CIELAB using \texttt{skimage.color.rgb2lab} with the D65 illuminant and the $2^\circ$ standard observer, then apply \texttt{deltaE\_ciede2000} in float64 with $k_L=k_C=k_H=1$. The pixelwise color difference $d_{00}$ follows the standard CIEDE2000 definition of \citet{sharma2005ciede2000}; smaller values indicate closer reconstructed colors.

For image $j$, let $\Omega_j$ be its unpadded pixel domain, and let $\mathbf c_{jp}$ and $\widehat{\mathbf c}_{jp}$ be the reference and reconstructed CIELAB colors at pixel $p$.  The reported dataset score is
\begin{equation}
 \overline{\Delta E}_{00}
 =\frac{1}{N}\sum_{j=1}^{N}
   \frac{1}{|\Omega_j|}\sum_{p\in\Omega_j}
   d_{00}(\mathbf c_{jp},\widehat{\mathbf c}_{jp}),
 \qquad N=295.
 \label{eq:vae-ciede2000-mean}
\end{equation}
Thus we first average pixelwise color differences within each image, then average the image scores with equal weight.  Unlike LPIPS in this experiment, $\Delta E_{00}$ is computed from the saved 8-bit RGB images.

\subsubsection{RGB and RGBA Generation Example}
\label{app:rgb-generation-protocol}

Figure~\ref{fig:rgb-backbone-generation} compares the original FLUX.1-dev RGB model, AlphaVAE, and \method with FLUX.1 using the following shared prompt:
\begin{quote}
A matte blue hexagonal nut has a cream fabric bandage crossing one cracked corner. A copper wrench curves around its lower edge, and two olive check marks float above the open center. Woven threads, brushed metal, and rounded painted forms create a friendly text-free repair-service emblem.
\end{quote}
All three outputs use seed 69, 28 sampling steps, guidance scale 3.5, and $1024\times1024$ resolution. The original RGB image retains its generated background; RGBA outputs are displayed on white. Each complete output canvas is shown without cropping.

\subsection{Ablation Settings}
\label{app:design-ablations}

\paragraph{Shared settings on RGBA-Image.}
Table~\ref{tab:coupled-decoder} and Table~\ref{tab:alpha-supervision} compare the full FLUX.2 Klein Base 9B and Qwen-Image-Edit-2511 models with their corresponding coupled-decoder and stop-gradient variants. Table~\ref{tab:alpha-loss} compares EA Loss and alpha $L_1$ on both backbones using their corresponding full-model controls. Each setting is evaluated on the same 735 target layers after 20K optimizer updates. Inference uses the same input images and target descriptions, base seed 42, 50 sampling steps, guidance 4, and aspect-ratio-preserving resizing to approximately $1024^2$ pixels without cropping.

\paragraph{Unidirectional feature sharing.}
The coupled RGBA decoder adds zero-initialized alpha-to-RGB feedback at the bottleneck and upsampling stages while keeping the pretrained RGB parameters frozen. We report RGB L1 over pixels with nonzero target alpha to measure the effect on foreground color fidelity, together with Alpha Grad and Alpha Conn for opacity structure.

\paragraph{Alpha supervision of the DiT latent state.}
The stop-gradient setting detaches the predicted clean latent before alpha decoding; flow matching still trains the transformer LoRA, and the alpha head continues to train. Soft IoU and SAD measure how preventing alpha supervision from reaching the DiT affects generated opacity; black/white-averaged composite LPIPS measures the corresponding perceptual fidelity. Alpha Grad and Alpha Conn measure gradient and connectivity errors.

\paragraph{Standard alpha $L_1$ loss.}
The alpha-$L_1$ setting replaces the complete EA Loss with the mean absolute error on directly regressed physical opacity:
\begin{equation}
 \ell_{L_1}=\frac{1}{N}\sum_i\left|\widehat a_i^{\rm raw}-a_i\right|.
 \label{eq:alpha-l1-ablation}
\end{equation}
Here $\widehat a^{\rm raw}$ is the linear output of the alpha head, with its final bias initialized to $0.5$.  Training uses this output directly, without the Fisher-coordinate mapping, Fenchel--Young gap, endpoint-group weighting, root aggregation, sigmoid, or output clipping.  Predicted alpha is clipped to $[0,1]$ only when exporting images.  The head's convolutional structure, flow-matching objective, time weight $w(\sigma)$, alpha-loss coefficient, and alpha-to-DiT gradient path are retained.  We report Soft IoU and SAD for opacity accuracy, Alpha Grad and Alpha Conn for gradient and connectivity errors, and black/white-averaged composite LPIPS for perceptual fidelity under the two objectives.

\subsection{Efficiency Accounting}
\label{app:efficiency}

Table~\ref{tab:trainable-parameters} reports the trainable DiT LoRA and alpha-branch parameters for each backbone.
\begin{table}[!htbp]
\centering
\caption{Trainable parameters of \method, in millions. Counts include only the DiT LoRA modules and alpha branch.}
\label{tab:trainable-parameters}
\small
\setlength{\tabcolsep}{8pt}
\renewcommand{\arraystretch}{1.2}
\AlphaFitTable{%
\begin{tabular}{llcc}
\toprule
\AlphaTableHead{Backbone} & \AlphaTableHead{Task} & \AlphaTableHead{DiT LoRA} & \AlphaTableHead{Alpha branch} \\
\midrule
FLUX.1-dev & T2RGBA & 179.306 & 12.436 \\
FLUX.2 Klein Base 9B & T2RGBA, I2RGBA & 111.149 & 12.436 \\
Qwen-Image-2512 & T2RGBA & 94.372 & 12.253 \\
LongCat-Image & T2RGBA & 39.322 & 12.436 \\
Qwen-Image-Edit-2511 & I2RGBA & 94.372 & 12.253 \\
\bottomrule
\end{tabular}%
}
\end{table}

The trainable parameter count is
\begin{equation}
 P_{\rm train}=P_{\rm LoRA}+P_{\rm alpha},
\end{equation}
where the two terms count the transformer LoRA and alpha-branch parameters. Inference adds only the alpha branch and its RGB-to-alpha feature injections to the backbone's denoising trajectory and decoder traversal, as specified in Appendix~\ref{app:implementation}.

\clearpage
\section{Additional Quantitative Results}
\label{app:quantitative}

The metric definitions and evaluation settings for these additional results are given in Appendix~\ref{app:protocol}.

\subsection{Text-to-RGBA Results on AIM500}
\label{app:additional-benchmarks}
\label{app:t2rgba-aim500}

Table~\ref{tab:t2rgba-aim500} evaluates the eight text-to-RGBA methods on the 500 AIM500 prompts released with AlphaVAE~\citep{wang2025alphavae}. All four \method backbones obtain higher Aesthetic Score and ImageReward than the compared baselines. FLUX.1 achieves the highest Aesthetic Score, and FLUX.2 achieves the highest ImageReward. FID and CLIPScore are reported under the same black/white-composite protocol as the AlphaTest comparison.

\begin{table}[!htbp]

\centering
\caption{Text-to-RGBA generation on AIM500.}
\label{tab:t2rgba-aim500}

\footnotesize
\setlength{\tabcolsep}{7pt}
\renewcommand{\arraystretch}{1.2}
\AlphaFitTable{%
\begin{tabular}{@{}llcccc@{}}
\toprule
\multicolumn{2}{c}{\AlphaTableHead{Method / backbone}} & \AlphaTableHead{\mbox{FID\,$\downarrow$}} & \AlphaTableHead{\mbox{CLIPScore\,$\uparrow$}} & \AlphaTableHead{\mbox{Aesthetic\,$\uparrow$}} & \AlphaTableHead{\mbox{ImageReward\,$\uparrow$}} \\
\midrule
\multicolumn{2}{c}{LayerDiffuse} & 76.8254 & \textbf{0.8493} & 5.5621 & 1.0830 \\
\multicolumn{2}{c}{AlphaVAE} & 74.7061 & 0.8334 & 5.5631 & 0.9565 \\
\multicolumn{2}{c}{OmniAlpha} & \underline{72.2913} & 0.8426 & 5.3879 & 1.0430 \\
\multicolumn{2}{c}{Qwen-Image-2.1} & \textbf{69.5794} & 0.8409 & 5.4272 & 1.0541 \\
\midrule
\multirow{4}{*}{\method} & FLUX.1 & 80.3795 & 0.8189 & \textbf{5.7184} & 1.2475 \\
 & FLUX.2 & 75.9445 & \underline{0.8484} & 5.6102 & \textbf{1.3638} \\
 & Qwen Image & 77.8268 & 0.8337 & 5.6646 & 1.2680 \\
 & LongCat & 77.4516 & 0.8428 & \underline{5.6834} & \underline{1.3593} \\
\bottomrule
\end{tabular}%
}
\par
\end{table}

\subsection{RGB Reconstruction Fidelity}
\label{app:vae-rgb-fidelity}

Table~\ref{tab:vae-rgb-fidelity} reports reconstruction errors on the distinct source images in RGBA-Image, under the protocol in Appendix~\ref{app:vae-rgb-protocol}. Both RGBA VAEs have higher LPIPS and CIEDE2000 errors than their corresponding original RGB VAEs. For the Qwen-Image-Edit backbone, LPIPS increases from 0.03807 to 0.05197 and $\Delta E_{00}$ from 1.917 to 2.078; for FLUX.1-dev, the corresponding values increase from 0.01216 to 0.04762 and from 2.109 to 3.240. These measurements complement the reconstruction examples in Figure~\ref{fig:vae-rgb-reconstruction}.

\begin{table}[!htbp]

\centering
\caption{RGB reconstruction fidelity on RGBA-Image.}
\label{tab:vae-rgb-fidelity}

\footnotesize
\setlength{\tabcolsep}{8pt}
\renewcommand{\arraystretch}{1.2}
\AlphaFitTable{%
\begin{tabular}{@{}llcc@{}}
\toprule
\AlphaTableHead{RGB backbone} & \AlphaTableHead{VAE} & \AlphaTableHead{LPIPS\,$\downarrow$} & \AlphaTableHead{$\Delta E_{00}$\,$\downarrow$} \\
\midrule
\multirow{2}{*}{Qwen-Image-Edit} & Original RGB & \textbf{0.03807} & \textbf{1.917} \\
 & OmniAlpha RGBA & 0.05197 & 2.078 \\
\midrule
\multirow{2}{*}{FLUX.1-dev} & Original RGB & \textbf{0.01216} & \textbf{2.109} \\
 & AlphaVAE RGBA & 0.04762 & 3.240 \\
\bottomrule
\end{tabular}%
}
\par
\end{table}

\clearpage
\subsection{Robustness to Prompt Templates}
\label{app:prompt-robustness}

We evaluate how instruction wording affects image-to-RGBA extraction with the FLUX.2 Klein Base 9B and Qwen-Image-Edit-2511 variants of \method on all 735 target layers in RGBA-Image. Each target description is kept verbatim, while the surrounding instruction follows one of the four templates in Table~\ref{tab:prompt-robustness}: the standard OmniAlpha extraction template, a minimal request, a background-removal instruction, or explicit RGBA output requirements. Within each backbone, we fix the 20,000-step checkpoint, source images, resizing policy, 50 sampling steps, and guidance scale 4. Each target retains its seed from the corresponding backbone's standard run with base seed 42; standard predictions are reused from the main comparison.

Table~\ref{tab:prompt-robustness} reports the mean metrics for each template. For FLUX.2, Soft IoU ranges from 0.8276 to 0.8296 and LPIPS from 0.0206 to 0.0214; for Qwen Image Edit, the corresponding ranges are 0.9223--0.9299 and 0.0126--0.0138. All three alternative templates improve Qwen Image Edit's RGB L1 and Soft IoU over the standard template, with the background-removal instruction achieving the best RGB L1, Soft IoU, SAD, and LPIPS among the four templates. SAD shows greater variation, spanning 46.3572--52.8804 for FLUX.2 and 27.8815--33.1917 for Qwen Image Edit. These results support robustness of overlap and perceptual fidelity to the tested instruction changes on both backbones, with greater sensitivity in absolute alpha error.

The table uses the same six metrics as the main image-to-RGBA evaluation: RGB L1, Soft IoU, SAD, LPIPS, Alpha Grad, and Alpha Conn, defined in Appendix~\ref{app:i2rgba-metrics}. Alpha Grad and Alpha Conn for the standard template are taken from the corresponding main-comparison runs.

\begin{table}[!htbp]

\centering
\setlength{\parskip}{8pt}
\caption{Prompt-template robustness of \method with FLUX.2 Klein Base 9B and Qwen-Image-Edit-2511 on RGBA-Image. Each reported metric is averaged over 735 target layers. The placeholder \texttt{\{description\}} contains the same target description in every template.}
\label{tab:prompt-robustness}

\footnotesize
\setlength{\tabcolsep}{6pt}
\renewcommand{\arraystretch}{1.2}
\AlphaFitTable{%
\begin{tabular}{@{}>{\raggedright\arraybackslash}m{0.18\linewidth}>{\raggedright\arraybackslash}m{0.77\linewidth}@{}}
\toprule
\AlphaTableHead{Template} & \AlphaTableHead{Complete instruction} \\
\midrule
Standard\newline (OmniAlpha) & \texttt{Extract the object described by the text, preserving fine details and transparency: \{description\}} \\[4pt]
Minimal & \texttt{Extract the object described below:}\newline
\texttt{\{description\}} \\[4pt]
Remove background & \texttt{Remove the background and all other objects, keeping only the target described below:}\newline
\texttt{\{description\}} \\[4pt]
RGBA output & \texttt{Target layer: \{description\}}\newline
\texttt{Output requirements: RGBA format; transparent background; original position and scale.} \\
\bottomrule
\end{tabular}%
}

\par
\AlphaFitTable{%
\begin{tabular}{@{}lcccccc@{}}
\toprule
\AlphaTableHead{Template} & \AlphaTableHead{RGB L1\,$\downarrow$} & \AlphaTableHead{Soft IoU\,$\uparrow$} & \AlphaTableHead{SAD\,$\downarrow$} & \AlphaTableHead{LPIPS\,$\downarrow$} & \AlphaTableHead{Alpha Grad\,$\downarrow$} & \AlphaTableHead{Alpha Conn\,$\downarrow$} \\
\midrule
\multicolumn{7}{@{}l}{\textbf{FLUX.2 Klein Base 9B}} \\
\midrule
Standard & 0.1155 & 0.8296 & 46.3572 & 0.0207 & 69.5947 & 46.0840 \\
Minimal & 0.1154 & 0.8276 & 52.8804 & 0.0214 & 69.6647 & 52.6148 \\
Remove background & 0.1163 & 0.8289 & 48.5944 & 0.0206 & 69.9193 & 48.3572 \\
RGBA output & 0.1160 & 0.8285 & 52.3816 & 0.0213 & 69.8098 & 52.1291 \\
\midrule
\multicolumn{7}{@{}l}{\textbf{Qwen Image Edit}} \\
\midrule
Standard & 0.0746 & 0.9223 & 30.3820 & 0.0136 & 27.5221 & 30.0584 \\
Minimal & 0.0710 & 0.9262 & 33.1917 & 0.0138 & 27.0243 & 32.8460 \\
Remove background & 0.0691 & 0.9299 & 27.8815 & 0.0126 & 26.0230 & 27.5284 \\
RGBA output & 0.0731 & 0.9250 & 31.6661 & 0.0134 & 26.8173 & 31.3408 \\
\bottomrule
\end{tabular}%
}
\par
\end{table}

\clearpage
\subsection{Robustness Across Inference Seeds}
\label{app:multi-seed}

We evaluate inference base seeds 42, 43, and 44 using the trained checkpoints and evaluation settings of the main comparisons. Table~\ref{tab:multi-seed-robustness} reports the mean and sample standard deviation across the three runs on RGBA-Text, AlphaTest, and RGBA-Image. Seed 42 is the main-table run. FID is computed separately within each run and background before averaging over backgrounds and summarizing across seeds.

All four \method text-to-RGBA backbones outperform the compared baselines in ImageReward for every evaluated seed on both text benchmarks. LongCat obtains ImageReward scores of $1.0040\pm0.0023$ on RGBA-Text and $1.5146\pm0.0094$ on AlphaTest. On RGBA-Image, Qwen Image Edit ranks first on all six metrics in each run, with mean Soft IoU $0.9236\pm0.0023$ and LPIPS $0.0132\pm0.0009$. The per-seed comparisons therefore retain these main observations across the three inference runs.

\begin{table}[!htbp]

\centering
\setlength{\parskip}{8pt}
\caption{Inference-seed robustness. Entries give the mean and sample standard deviation.}
\label{tab:multi-seed-robustness}

\footnotesize
\setlength{\tabcolsep}{5pt}
\renewcommand{\arraystretch}{1.2}
\textbf{RGBA-Text}\par
\AlphaFitTable{%
\begin{tabular}{@{}llccc@{}}
\toprule
\multicolumn{2}{c}{\AlphaTableHead{Method / backbone}} & \AlphaTableHead{CLIPScore $\uparrow$} & \AlphaTableHead{Aesthetic $\uparrow$} & \AlphaTableHead{ImageReward $\uparrow$} \\
\midrule
\multicolumn{2}{l}{LayerDiffuse} & 0.8505 $\pm$ 0.0032 & 5.4525 $\pm$ 0.0032 & 0.5684 $\pm$ 0.0217 \\
\multicolumn{2}{l}{AlphaVAE} & 0.8314 $\pm$ 0.0007 & 5.2758 $\pm$ 0.0070 & 0.5305 $\pm$ 0.0128 \\
\multicolumn{2}{l}{OmniAlpha} & 0.8515 $\pm$ 0.0024 & 5.2981 $\pm$ 0.0114 & 0.7268 $\pm$ 0.0233 \\
\multicolumn{2}{l}{Qwen-Image-2.1} & 0.8492 $\pm$ 0.0026 & 5.3023 $\pm$ 0.0092 & 0.6287 $\pm$ 0.0093 \\
\midrule
\multirow{4}{*}{\method} & FLUX.1 & 0.8354 $\pm$ 0.0035 & 5.5108 $\pm$ 0.0128 & 0.7667 $\pm$ 0.0187 \\
 & FLUX.2 & 0.8667 $\pm$ 0.0018 & 5.5136 $\pm$ 0.0031 & 0.9741 $\pm$ 0.0143 \\
 & Qwen Image & 0.8681 $\pm$ 0.0005 & 5.5287 $\pm$ 0.0114 & 0.9738 $\pm$ 0.0255 \\
 & LongCat & 0.8658 $\pm$ 0.0026 & 5.5284 $\pm$ 0.0038 & 1.0040 $\pm$ 0.0023 \\
\bottomrule
\end{tabular}%
}

\textbf{AlphaTest}\par
\AlphaFitTable{%
\begin{tabular}{@{}llcccc@{}}
\toprule
\multicolumn{2}{c}{\AlphaTableHead{Method / backbone}} & \AlphaTableHead{FID $\downarrow$} & \AlphaTableHead{CLIPScore $\uparrow$} & \AlphaTableHead{Aesthetic $\uparrow$} & \AlphaTableHead{ImageReward $\uparrow$} \\
\midrule
\multicolumn{2}{l}{LayerDiffuse} & 80.8024 $\pm$ 0.9294 & 0.8494 $\pm$ 0.0007 & 5.5161 $\pm$ 0.0009 & 1.1457 $\pm$ 0.0119 \\
\multicolumn{2}{l}{AlphaVAE} & 74.2638 $\pm$ 0.7549 & 0.8227 $\pm$ 0.0014 & 5.5810 $\pm$ 0.0110 & 1.1074 $\pm$ 0.0226 \\
\multicolumn{2}{l}{OmniAlpha} & 72.7440 $\pm$ 0.0383 & 0.8364 $\pm$ 0.0027 & 5.3523 $\pm$ 0.0023 & 1.2086 $\pm$ 0.0057 \\
\multicolumn{2}{l}{Qwen-Image-2.1} & 71.9380 $\pm$ 0.3099 & 0.8313 $\pm$ 0.0027 & 5.4760 $\pm$ 0.0083 & 1.1776 $\pm$ 0.0184 \\
\midrule
\multirow{4}{*}{\method} & FLUX.1 & 77.4103 $\pm$ 0.1232 & 0.8097 $\pm$ 0.0009 & 5.7074 $\pm$ 0.0097 & 1.2975 $\pm$ 0.0094 \\
 & FLUX.2 & 74.8920 $\pm$ 0.3339 & 0.8439 $\pm$ 0.0013 & 5.6347 $\pm$ 0.0045 & 1.4422 $\pm$ 0.0086 \\
 & Qwen Image & 75.8810 $\pm$ 0.2563 & 0.8340 $\pm$ 0.0003 & 5.6853 $\pm$ 0.0088 & 1.3707 $\pm$ 0.0094 \\
 & LongCat & 74.9678 $\pm$ 0.5422 & 0.8425 $\pm$ 0.0012 & 5.7198 $\pm$ 0.0084 & 1.5146 $\pm$ 0.0094 \\
\bottomrule
\end{tabular}%
}

\textbf{RGBA-Image}\par
\AlphaFitTable{%
\begin{tabular}{@{}llccc@{}}
\toprule
\multicolumn{2}{c}{\AlphaTableHead{Method / backbone}} & \AlphaTableHead{RGB L1 $\downarrow$} & \AlphaTableHead{Soft IoU $\uparrow$} & \AlphaTableHead{SAD $\downarrow$} \\
\midrule
\multicolumn{2}{l}{OmniAlpha} & 0.1032 $\pm$ 0.0024 & 0.6886 $\pm$ 0.0005 & 175.7390 $\pm$ 2.3482 \\
\multicolumn{2}{l}{Qwen-Image-2.1} & 0.0909 $\pm$ 0.0037 & 0.6853 $\pm$ 0.0069 & 185.0934 $\pm$ 7.4971 \\
\midrule
\multirow{2}{*}{\method} & FLUX.2 & 0.1155 $\pm$ 0.0001 & 0.8270 $\pm$ 0.0022 & 52.6253 $\pm$ 6.4488 \\
 & Qwen Image Edit & 0.0739 $\pm$ 0.0008 & 0.9236 $\pm$ 0.0023 & 29.3979 $\pm$ 3.0715 \\
\bottomrule
\end{tabular}%
}
\par
\AlphaFitTable{%
\begin{tabular}{@{}llccc@{}}
\toprule
\multicolumn{2}{c}{\AlphaTableHead{Method / backbone}} & \AlphaTableHead{LPIPS $\downarrow$} & \AlphaTableHead{Alpha Grad $\downarrow$} & \AlphaTableHead{Alpha Conn $\downarrow$} \\
\midrule
\multicolumn{2}{l}{OmniAlpha} & 0.0902 $\pm$ 0.0010 & 70.8178 $\pm$ 0.8268 & 174.2699 $\pm$ 2.3331 \\
\multicolumn{2}{l}{Qwen-Image-2.1} & 0.0855 $\pm$ 0.0022 & 81.3381 $\pm$ 1.8358 & 169.7300 $\pm$ 7.6814 \\
\midrule
\multirow{2}{*}{\method} & FLUX.2 & 0.0218 $\pm$ 0.0010 & 70.0745 $\pm$ 0.6561 & 52.3602 $\pm$ 6.4494 \\
 & Qwen Image Edit & 0.0132 $\pm$ 0.0009 & 27.5225 $\pm$ 0.4195 & 29.0806 $\pm$ 3.0700 \\
\bottomrule
\end{tabular}%
}
\par
\end{table}

\clearpage
\subsection{Bootstrap Confidence Intervals}
\label{app:bootstrap}

We estimate input-sampling uncertainty with 10,000 paired bootstrap resamples and 95\% bias-corrected and accelerated (BCa) intervals computed with \href{https://docs.scipy.org/doc/scipy/reference/generated/scipy.stats.bootstrap.html}{SciPy}. Scores are first averaged over the three inference runs and, where applicable, the black/white backgrounds. We resample prompts for RGBA-Text and AlphaTest, and source-image clusters for RGBA-Image, retaining all associated layers and the layer-weighted mean. Every method uses the same resamples and bootstrap seed 42. Table~\ref{tab:bootstrap-confidence-intervals} reports the resulting intervals.

Paired differences compare each \method variant with the baseline having the best observed mean for that metric. LongCat's ImageReward gains over OmniAlpha are $0.2771$ with interval $[0.2282,\,0.3296]$ on RGBA-Text and $0.3060$ with interval $[0.2715,\,0.3435]$ on AlphaTest. All six Qwen Image Edit differences on RGBA-Image favor \method; its Soft IoU gain over OmniAlpha is $0.2350$ with interval $[0.2115,\,0.2605]$. These pointwise 95\% intervals quantify input-sampling uncertainty for the mean across the saved runs.

\begin{table}[!htbp]
\centering
\setlength{\parskip}{8pt}
\caption{Bootstrap results. Entries give the mean and 95\% BCa interval.}
\label{tab:bootstrap-confidence-intervals}
\small
\setlength{\tabcolsep}{4pt}
\renewcommand{\arraystretch}{1.15}
\textbf{RGBA-Text}\par
\AlphaFitTable{%
\begin{tabular}{@{}llccc@{}}
\toprule
\multicolumn{2}{c}{\AlphaTableHead{Method / backbone}} & \AlphaTableHead{CLIPScore $\uparrow$} & \AlphaTableHead{Aesthetic $\uparrow$} & \AlphaTableHead{ImageReward $\uparrow$} \\
\midrule
\multicolumn{2}{l}{LayerDiffuse} & 0.8505 [0.8427, 0.8585] & 5.4525 [5.4035, 5.4958] & 0.5684 [0.4780, 0.6580] \\
\multicolumn{2}{l}{AlphaVAE} & 0.8314 [0.8226, 0.8399] & 5.2758 [5.2279, 5.3239] & 0.5305 [0.4351, 0.6247] \\
\multicolumn{2}{l}{OmniAlpha} & 0.8515 [0.8428, 0.8599] & 5.2981 [5.2519, 5.3415] & 0.7268 [0.6385, 0.8117] \\
\multicolumn{2}{l}{Qwen-Image-2.1} & 0.8492 [0.8409, 0.8578] & 5.3023 [5.2570, 5.3482] & 0.6287 [0.5379, 0.7144] \\
\midrule
\multirow{4}{*}{\method} & FLUX.1 & 0.8354 [0.8270, 0.8436] & 5.5108 [5.4614, 5.5584] & 0.7667 [0.6735, 0.8559] \\
 & FLUX.2 & 0.8667 [0.8582, 0.8755] & 5.5136 [5.4640, 5.5614] & 0.9741 [0.8911, 1.0532] \\
 & Qwen Image & 0.8681 [0.8603, 0.8763] & 5.5287 [5.4800, 5.5751] & 0.9738 [0.8920, 1.0515] \\
 & LongCat & 0.8658 [0.8574, 0.8745] & 5.5284 [5.4784, 5.5768] & 1.0040 [0.9244, 1.0803] \\
\bottomrule
\end{tabular}%
}

\textbf{AlphaTest}\par
\AlphaFitTable{%
\begin{tabular}{@{}llccc@{}}
\toprule
\multicolumn{2}{c}{\AlphaTableHead{Method / backbone}} & \AlphaTableHead{CLIPScore $\uparrow$} & \AlphaTableHead{Aesthetic $\uparrow$} & \AlphaTableHead{ImageReward $\uparrow$} \\
\midrule
\multicolumn{2}{l}{LayerDiffuse} & 0.8494 [0.8436, 0.8554] & 5.5161 [5.4870, 5.5436] & 1.1457 [1.0848, 1.2025] \\
\multicolumn{2}{l}{AlphaVAE} & 0.8227 [0.8168, 0.8290] & 5.5810 [5.5473, 5.6127] & 1.1074 [1.0348, 1.1781] \\
\multicolumn{2}{l}{OmniAlpha} & 0.8364 [0.8304, 0.8426] & 5.3523 [5.3219, 5.3818] & 1.2086 [1.1426, 1.2676] \\
\multicolumn{2}{l}{Qwen-Image-2.1} & 0.8313 [0.8251, 0.8375] & 5.4760 [5.4434, 5.5089] & 1.1776 [1.1140, 1.2379] \\
\midrule
\multirow{4}{*}{\method} & FLUX.1 & 0.8097 [0.8032, 0.8165] & 5.7074 [5.6745, 5.7396] & 1.2975 [1.2368, 1.3543] \\
 & FLUX.2 & 0.8439 [0.8374, 0.8506] & 5.6347 [5.6016, 5.6672] & 1.4422 [1.3865, 1.4915] \\
 & Qwen Image & 0.8340 [0.8276, 0.8404] & 5.6853 [5.6529, 5.7176] & 1.3707 [1.3137, 1.4208] \\
 & LongCat & 0.8425 [0.8363, 0.8491] & 5.7198 [5.6893, 5.7497] & 1.5146 [1.4670, 1.5559] \\
\bottomrule
\end{tabular}%
}

\end{table}

\begin{table}[!htbp]
\ContinuedFloat
\centering
\setlength{\parskip}{8pt}
\caption[]{Bootstrap results (continued). Entries give the mean and 95\% BCa interval for RGBA-Image.}
\small
\setlength{\tabcolsep}{4pt}
\renewcommand{\arraystretch}{1.15}
\textbf{RGBA-Image}\par
\AlphaFitTable{%
\begin{tabular}{@{}llccc@{}}
\toprule
\multicolumn{2}{c}{\AlphaTableHead{Method / backbone}} & \AlphaTableHead{RGB L1 $\downarrow$} & \AlphaTableHead{Soft IoU $\uparrow$} & \AlphaTableHead{SAD $\downarrow$} \\
\midrule
\multicolumn{2}{l}{\multirow{2}{*}{OmniAlpha}} & 0.1032 & 0.6886 & 175.7390 \\
\multicolumn{2}{l}{} & [0.0942, 0.1159] & [0.6608, 0.7145] & [145.8230, 220.1509] \\
\multicolumn{2}{l}{\multirow{2}{*}{Qwen-Image-2.1}} & 0.0909 & 0.6853 & 185.0934 \\
\multicolumn{2}{l}{} & [0.0855, 0.0974] & [0.6638, 0.7056] & [153.2073, 224.3758] \\
\midrule
\multirow{4}{*}{\method} & \multirow{2}{*}{FLUX.2} & 0.1155 & 0.8270 & 52.6253 \\
 & & [0.1070, 0.1260] & [0.8050, 0.8457] & [41.4052, 79.6911] \\
 & \multirow{2}{*}{Qwen Image Edit} & 0.0739 & 0.9236 & 29.3979 \\
 & & [0.0698, 0.0796] & [0.9109, 0.9325] & [19.0433, 60.7663] \\
\bottomrule
\end{tabular}%
}
\par
\AlphaFitTable{%
\begin{tabular}{@{}llccc@{}}
\toprule
\multicolumn{2}{c}{\AlphaTableHead{Method / backbone}} & \AlphaTableHead{LPIPS $\downarrow$} & \AlphaTableHead{Alpha Grad $\downarrow$} & \AlphaTableHead{Alpha Conn $\downarrow$} \\
\midrule
\multicolumn{2}{l}{\multirow{2}{*}{OmniAlpha}} & 0.0902 & 70.8178 & 174.2699 \\
\multicolumn{2}{l}{} & [0.0801, 0.1025] & [62.3785, 82.2091] & [144.4615, 218.6509] \\
\multicolumn{2}{l}{\multirow{2}{*}{Qwen-Image-2.1}} & 0.0855 & 81.3381 & 169.7300 \\
\multicolumn{2}{l}{} & [0.0741, 0.0985] & [70.3746, 95.0267] & [138.1622, 208.3698] \\
\midrule
\multirow{4}{*}{\method} & \multirow{2}{*}{FLUX.2} & 0.0218 & 70.0745 & 52.3602 \\
 & & [0.0189, 0.0265] & [60.2413, 81.8434] & [41.1351, 79.3873] \\
 & \multirow{2}{*}{Qwen Image Edit} & 0.0132 & 27.5225 & 29.0806 \\
 & & [0.0109, 0.0183] & [24.0329, 33.1854] & [18.7263, 60.4309] \\
\bottomrule
\end{tabular}%
}
\end{table}

\FloatBarrier
\subsection{Human Evaluation}
\label{app:human-evaluation}

We conducted a blinded pairwise human evaluation of I2RGBA outputs on RGBA-Image with three experienced annotators. Each annotator evaluated randomly assigned comparisons, viewing the input image, extraction instruction, and two anonymized predictions with randomized left--right positions. Annotators selected the preferred output or a tie based on instruction consistency, visual quality, and the naturalness of transparency and object boundaries. Each \method backbone was compared against Qwen-Image-2.1 and OmniAlpha, without comparisons between our backbone variants.

For each backbone--baseline pair, let $W$, $T$, and $L$ denote wins, ties, and losses for \method, respectively. We report two complementary metrics: the win rate excluding ties,
\begin{equation}
    \mathrm{WR}_{\mathrm{excl}} = \frac{W}{W+L},
    \label{eq:human-eval-win-rate-excluding-ties}
\end{equation}
and the tie-adjusted win rate,
\begin{equation}
    \mathrm{WR}_{\mathrm{adj}} = \frac{W+0.5T}{W+T+L}.
    \label{eq:human-eval-tie-adjusted-win-rate}
\end{equation}
The results are reported in Table~\ref{tab:human-evaluation}.

\begin{table}[!htbp]
\centering
\caption{Pairwise human evaluation of \method on RGBA-Image for I2RGBA. Both win rates are reported as percentages.}
\label{tab:human-evaluation}
\footnotesize
\setlength{\tabcolsep}{6pt}
\renewcommand{\arraystretch}{1.2}
\AlphaFitTable{%
\begin{tabular}{@{}llcc@{}}
\toprule
\multirow{2}{*}{\AlphaTableHead{\method backbone}} & \multirow{2}{*}{\AlphaTableHead{Baseline}} & \multicolumn{2}{c}{Win rate (\%)} \\
\cmidrule(lr){3-4}
 & & Excluding ties & Tie-adjusted \\
\midrule
FLUX.2 Klein Base 9B & Qwen-Image-2.1 & 90.48 & 72.37 \\
FLUX.2 Klein Base 9B & OmniAlpha & 77.78 & 63.51 \\
Qwen-Image-Edit-2511 & Qwen-Image-2.1 & 96.00 & 79.49 \\
Qwen-Image-Edit-2511 & OmniAlpha & 100.00 & 80.56 \\
\bottomrule
\end{tabular}%
}
\end{table}

\FloatBarrier
\subsection{Inference Overhead}
\label{app:inference-overhead}

Since the DiT LoRA weights can be merged into the pretrained weights at inference, the additional computation of \method comes only from the alpha branch, including its RGB-to-alpha feature injections. We profile four backbones using a single NVIDIA GPU per run, with the same GPU model across all measurements. All runs use $1024\times1024$ resolution, batch size 1, BF16 precision, and 50 sampling steps. The RGB control uses the same decoder traversal and input latents as ours, with unnecessary block-input retention and Qwen's temporal feature cache removed. Table~\ref{tab:inference-overhead} reports VAE decoding time, peak allocated GPU memory, and end-to-end generation time.

VAE timings are medians of 30 measurements after 10 warmup iterations; generation timings are medians of three measurements after a three-step warmup. End-to-end generation time includes text encoding, sampling, and decoding, excluding model loading and image saving. We compute the end-to-end overhead by dividing the additional VAE decoding time by the matched control's end-to-end generation time.

\begin{table}[!htbp]
\centering
\caption{Inference overhead of \method across 4 backbones. Peak memory is measured for the VAE in isolation. End-to-end generation time is measured for the matched control, and end-to-end overhead is computed from unrounded timings.}
\label{tab:inference-overhead}
\footnotesize
\setlength{\tabcolsep}{5pt}
\renewcommand{\arraystretch}{1.2}
\AlphaFitTable{%
\begin{tabular}{@{}lcccccc@{}}
\toprule
\multirow{2}{*}{Backbone} &
\multicolumn{2}{c}{VAE decode time (s)} &
\multicolumn{2}{c}{Peak memory (GiB)} &
\multirow{2}{*}{\AlphaTableHead{End-to-end\\generation (s)}} &
\multirow{2}{*}{\AlphaTableHead{End-to-end\\overhead (\%)}} \\
\cmidrule(lr){2-3}\cmidrule(lr){4-5}
& RGB VAE & Ours VAE & RGB VAE & Ours VAE & & \\
\midrule
FLUX.1-dev & 0.149 & 0.247 & 2.297 & 2.389 & 22.103 & 0.445 \\
FLUX.2 Klein Base 9B & 0.148 & 0.249 & 2.299 & 2.386 & 41.212 & 0.244 \\
Qwen-Image-2512 & 0.138 & 0.239 & 2.118 & 2.331 & 44.206 & 0.228 \\
LongCat-Image & 0.148 & 0.247 & 2.297 & 2.389 & 23.154 & 0.427 \\
\bottomrule
\end{tabular}%
}
\end{table}

Across all four evaluated backbones, \method adds less than $0.5\%$ runtime overhead to end-to-end generation.

\FloatBarrier

\section{Additional Qualitative Results}
\label{app:qualitative}

This appendix extends the main-text showcases with additional matched comparisons from RGBA-Text and RGBA-Image, qualitative component ablations, and an exploratory study of RGBA input for editing. The generation, extraction, and ablation comparisons follow the evaluation settings in Appendix~\ref{app:protocol}.

\subsection{Text-to-RGBA Generation}
\label{app:t2rgba-qualitative}

Figures~\ref{fig:t2rgba-additional} and~\ref{fig:t2rgba-additional-2} compare all eight text-to-RGBA methods on eleven additional RGBA-Text prompts. Each case displays the shared content prompt above two rows: four baselines followed by all four \method settings. Method-specific prompt wrappers follow Appendix~\ref{app:training-settings}. All RGBA outputs retain their complete canvases and are displayed on the same checkerboard.

\begin{figure}[p]
\centering
\setlength{\parskip}{6pt}
\captionsetup{skip=5pt}
\includegraphics[width=\AlphaFigureWidth]{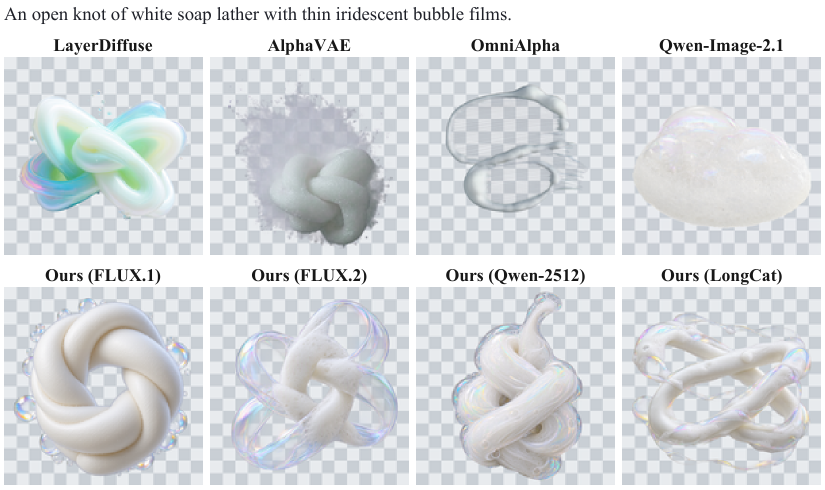}\par
\includegraphics[width=\AlphaFigureWidth]{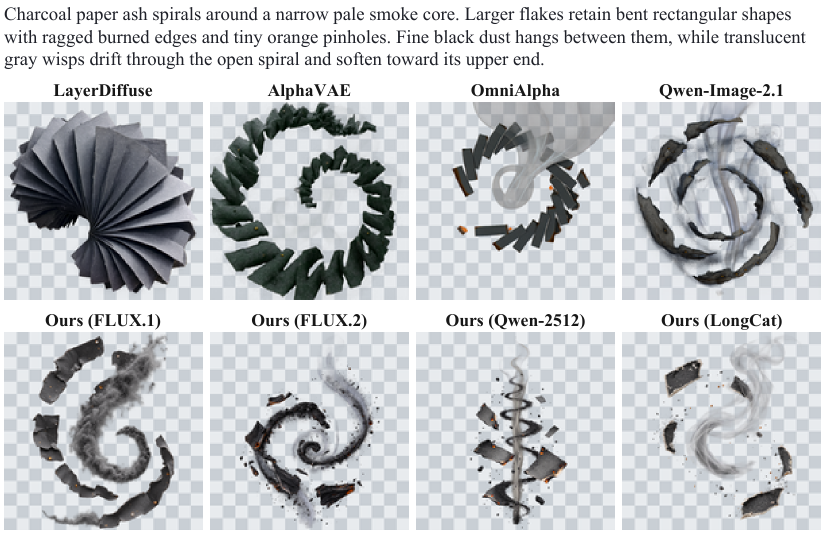}
\caption{Additional qualitative text-to-RGBA comparisons on 6 RGBA-Text prompts (set 1 of 2). Each case shows four baselines in the upper row and all four \method settings in the lower row. Complete RGBA canvases are retained.}
\label{fig:t2rgba-additional}
\end{figure}

\begin{figure}[p]
\ContinuedFloat
\centering
\setlength{\parskip}{6pt}
\captionsetup{skip=5pt}
\includegraphics[width=\AlphaFigureWidth]{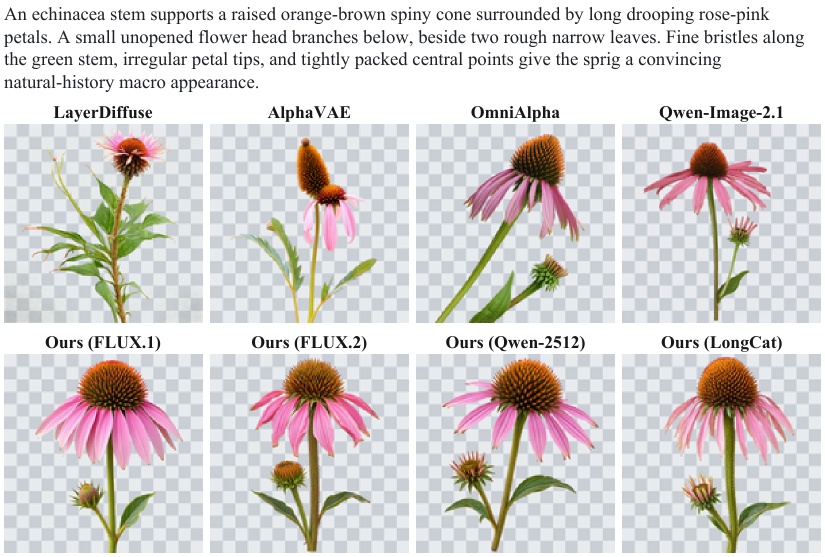}\par
\includegraphics[width=\AlphaFigureWidth]{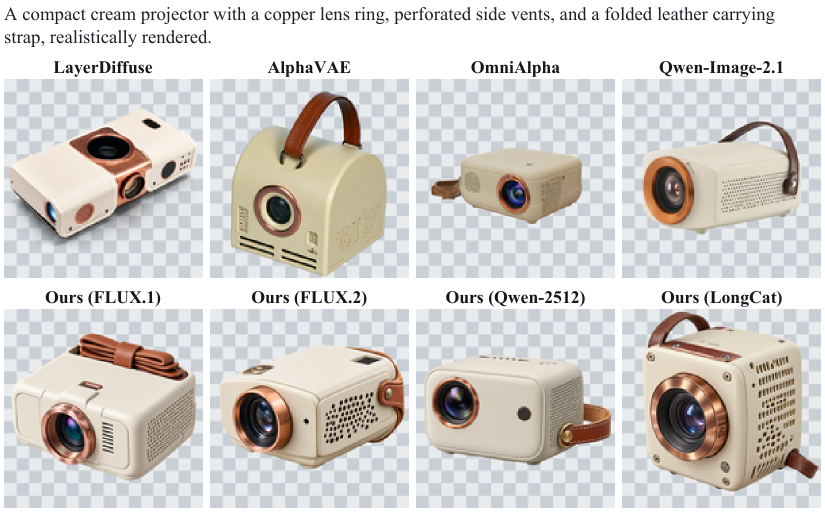}
\caption[]{Additional text-to-RGBA comparisons, set 1 of 2 (continued, cases 3--4 of 6).}
\end{figure}

\begin{figure}[p]
\ContinuedFloat
\centering
\setlength{\parskip}{6pt}
\captionsetup{skip=5pt}
\includegraphics[width=\AlphaFigureWidth]{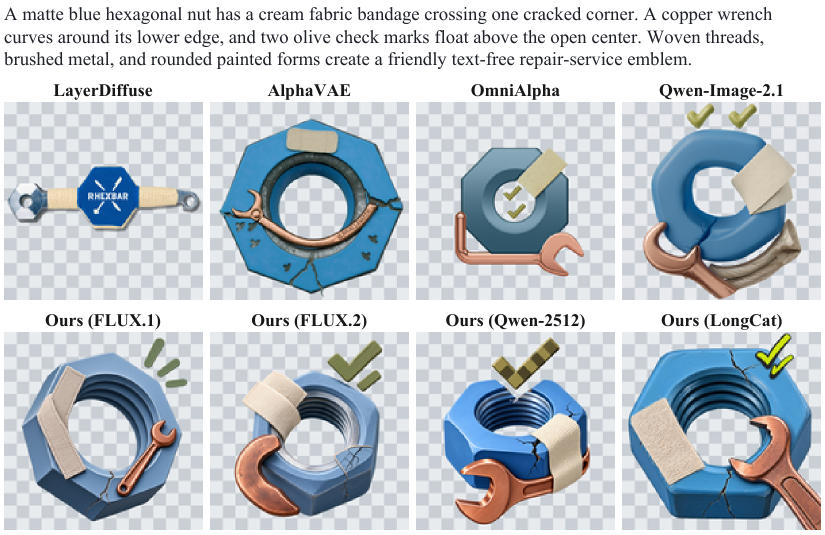}\par
\includegraphics[width=\AlphaFigureWidth]{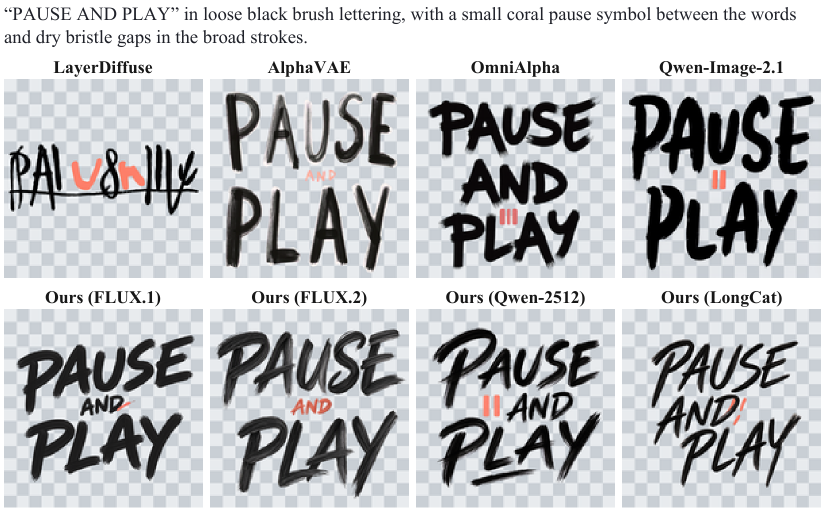}
\caption[]{Additional text-to-RGBA comparisons, set 1 of 2 (continued, cases 5--6 of 6).}
\end{figure}

\begin{figure}[p]
\centering
\setlength{\parskip}{6pt}
\captionsetup{skip=5pt}
\includegraphics[width=0.97\AlphaFigureWidth]{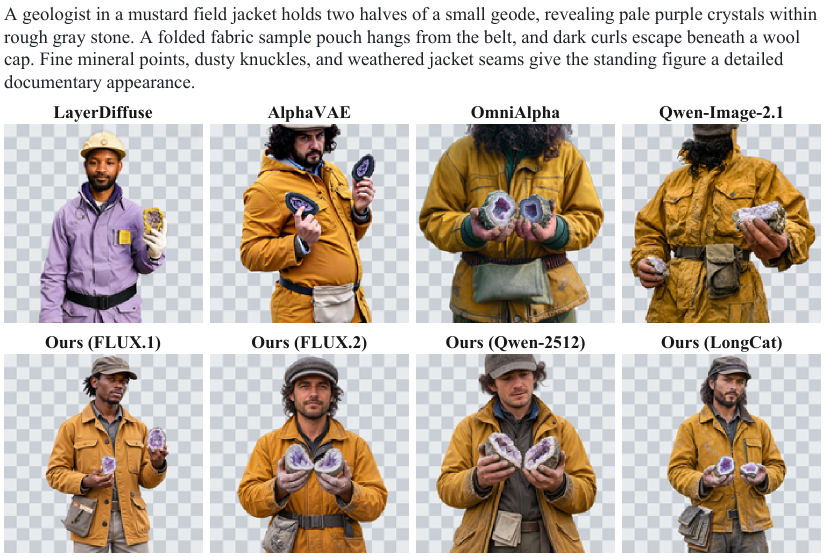}\par
\includegraphics[width=0.97\AlphaFigureWidth]{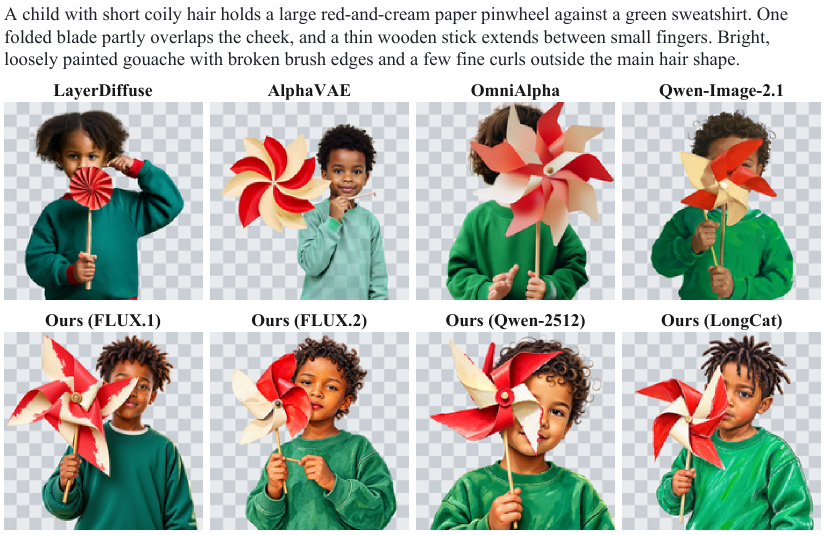}
\caption{Additional qualitative text-to-RGBA comparisons on 5 RGBA-Text prompts (set 2 of 2). Each case shows four baselines in the upper row and all four \method settings in the lower row. Complete RGBA canvases are retained.}
\label{fig:t2rgba-additional-2}
\end{figure}

\begin{figure}[p]
\ContinuedFloat
\centering
\setlength{\parskip}{6pt}
\captionsetup{skip=5pt}
\includegraphics[width=\AlphaFigureWidth]{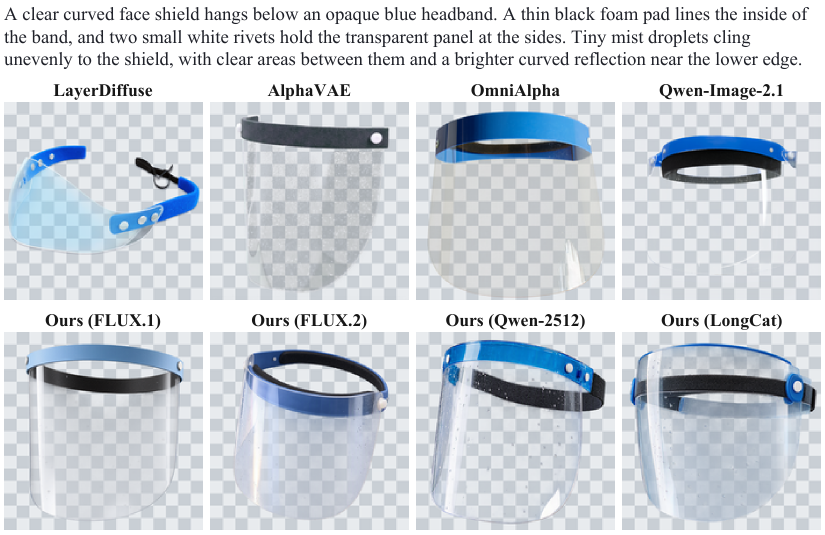}\par
\includegraphics[width=\AlphaFigureWidth]{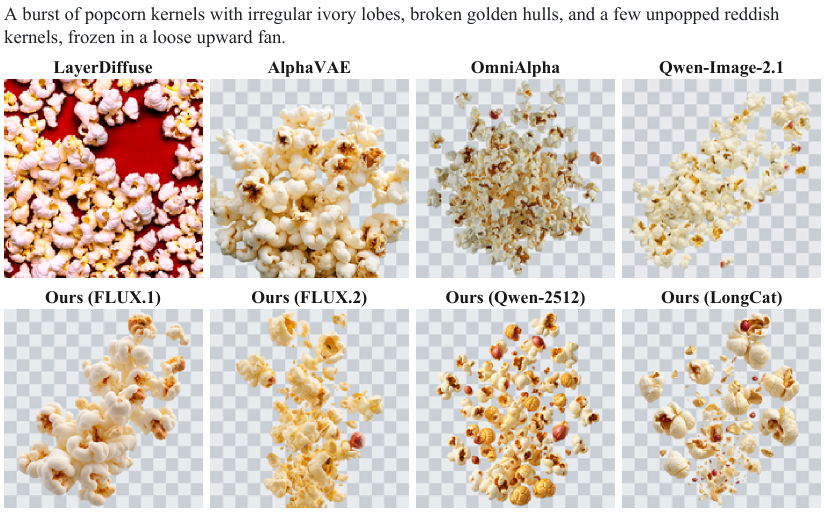}
\caption[]{Additional text-to-RGBA comparisons, set 2 of 2 (continued, cases 3--4 of 5).}
\end{figure}

\clearpage
\begin{figure}[!htbp]
\ContinuedFloat
\centering
\includegraphics[width=\AlphaFigureWidth]{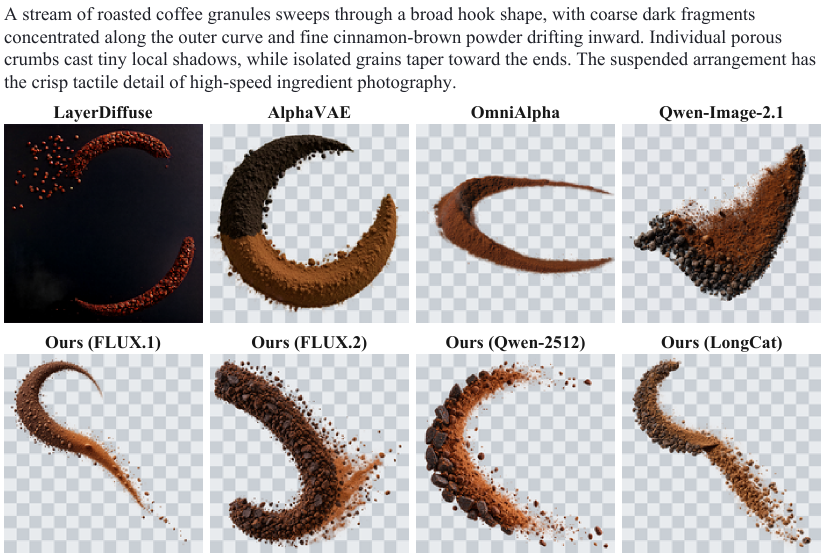}
\caption[]{Additional text-to-RGBA comparisons, set 2 of 2 (continued, case 5 of 5).}
\end{figure}

\FloatBarrier
\subsection{Image-to-RGBA Extraction}
\label{app:i2rgba-qualitative}

Figures~\ref{fig:i2rgba-additional} and~\ref{fig:i2rgba-additional-2} compare the four image-to-RGBA methods on fourteen additional RGBA-Image examples. Each target description appears above the complete input, target RGBA layer, and prediction canvases; the shared extraction prefix is stated in the figure caption. Where informative, one additional row examines either RGBA details or alpha boundaries using the same crop coordinates for every method. These visual comparisons complement the full-dataset scores in Table~\ref{tab:i2rgba-manual}.

\begin{figure}[p]
\centering
\setlength{\parskip}{6pt}
\captionsetup{skip=5pt}
\includegraphics[width=0.85\AlphaFigureWidth]{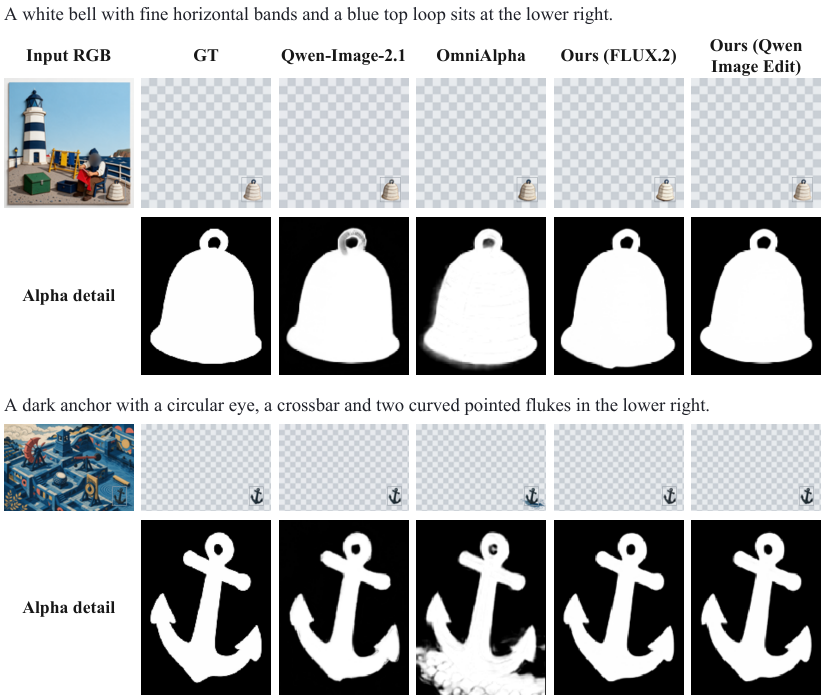}\par
\includegraphics[width=0.85\AlphaFigureWidth]{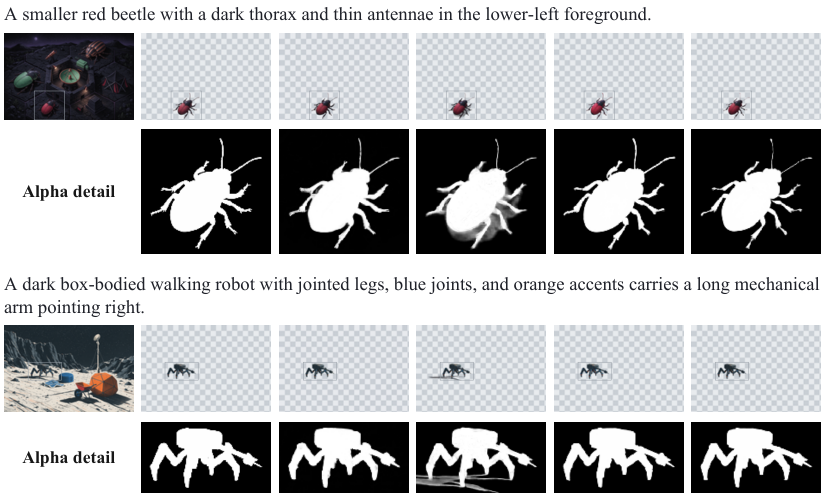}
\caption{Additional qualitative image-to-RGBA comparisons on seven RGBA-Image examples (set 1 of 2). Inputs, target layers, and all four methods retain their complete canvases. Informative alpha or RGBA details appear below the corresponding full views. Each displayed target description follows the shared prefix ``Extract the object described by the text, preserving fine details and transparency:''.}
\label{fig:i2rgba-additional}
\end{figure}

\begin{figure}[!htbp]
\ContinuedFloat
\centering
\setlength{\parskip}{6pt}
\captionsetup{skip=5pt}
\includegraphics[width=\AlphaFigureWidth]{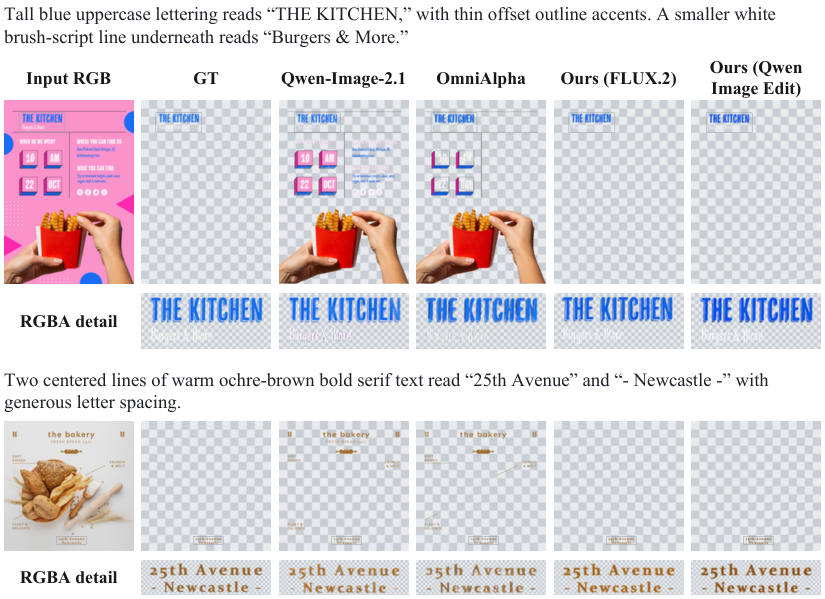}\par
\includegraphics[width=\AlphaFigureWidth]{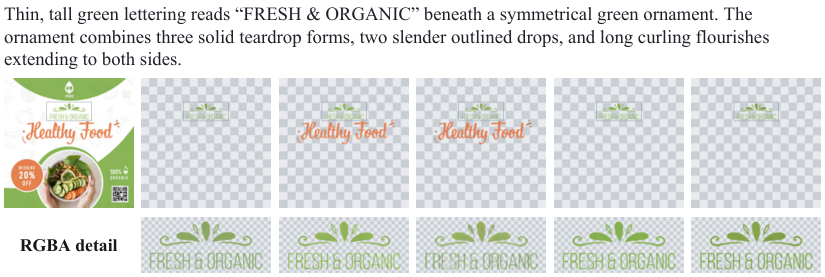}
\caption[]{Additional image-to-RGBA comparisons, set 1 of 2 (continued, cases 5--7 of 7).}
\end{figure}

\begin{figure}[!htbp]
\centering
\includegraphics[width=\AlphaFigureWidth]{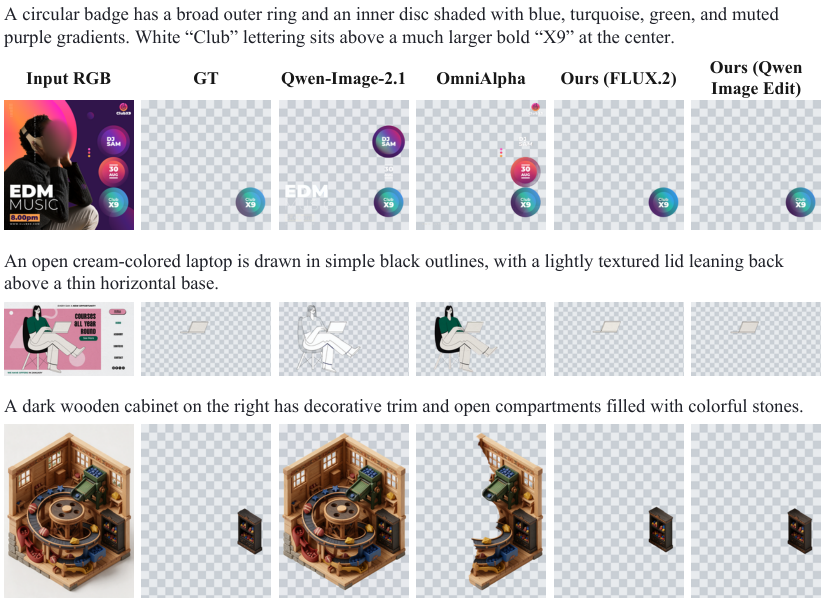}
\caption{Additional qualitative image-to-RGBA comparisons on seven RGBA-Image examples (set 2 of 2). Inputs, target layers, and all four methods retain their complete canvases. Informative alpha or RGBA details appear below the corresponding full views. Target descriptions use the shared extraction prefix in Figure~\ref{fig:i2rgba-additional}.}
\label{fig:i2rgba-additional-2}
\end{figure}

\begin{figure}[!htbp]
\ContinuedFloat
\centering
\setlength{\parskip}{6pt}
\captionsetup{skip=5pt}
\includegraphics[width=\AlphaFigureWidth]{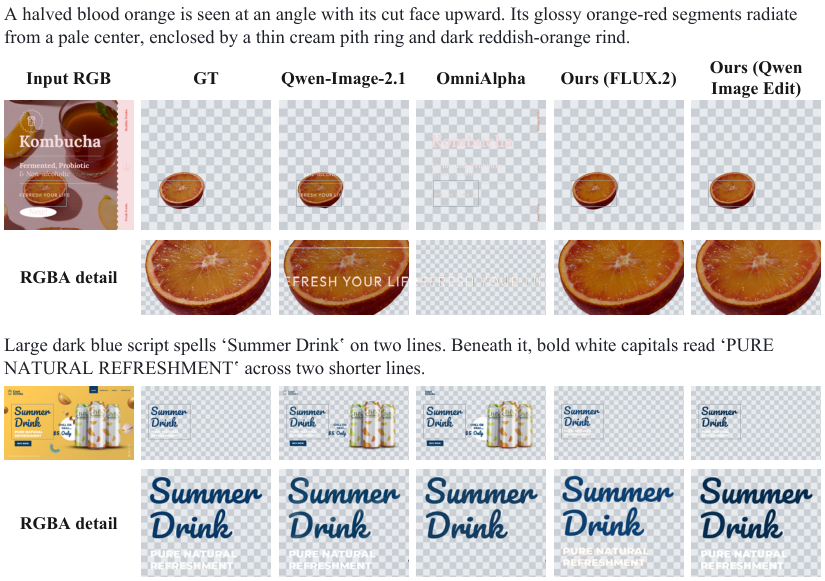}\par
\includegraphics[width=\AlphaFigureWidth]{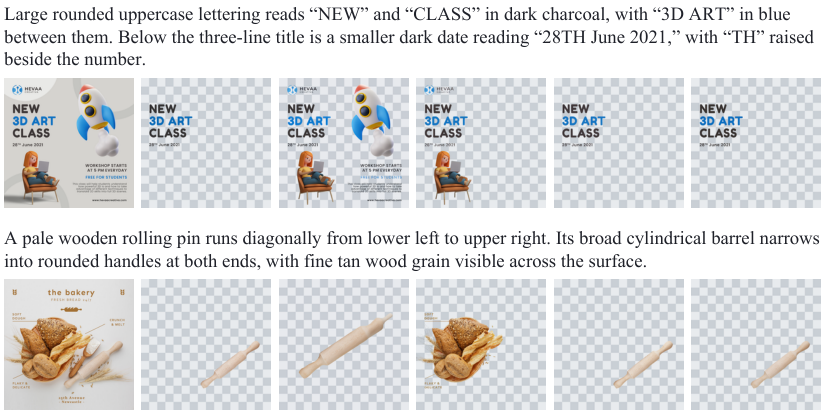}
\caption[]{Additional image-to-RGBA comparisons, set 2 of 2 (continued, cases 4--7 of 7).}
\end{figure}

\clearpage
\subsection{Unidirectional Feature Sharing}
\label{app:qualitative-coupled-vae}

Figure~\ref{fig:ablation-coupled-qualitative} compares \method with the coupled-decoder variant on RGBA-Image. Alpha-to-RGB feedback introduces visible foreground color shifts in these examples. The comparisons use the matched checkpoints and inference settings in Appendix~\ref{app:design-ablations}.

\begin{figure}[!htbp]
\centering
\setlength{\parskip}{6pt}
\captionsetup{skip=5pt}
\includegraphics[width=0.88\AlphaFigureWidth]{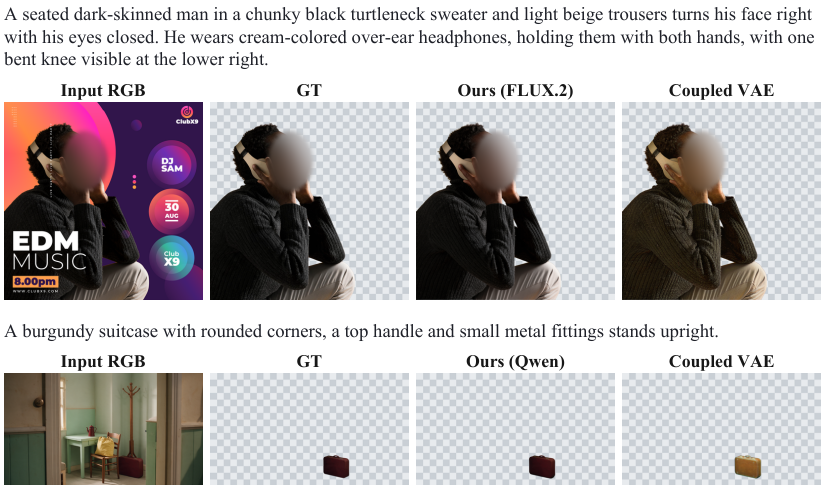}\par
\includegraphics[width=0.88\AlphaFigureWidth]{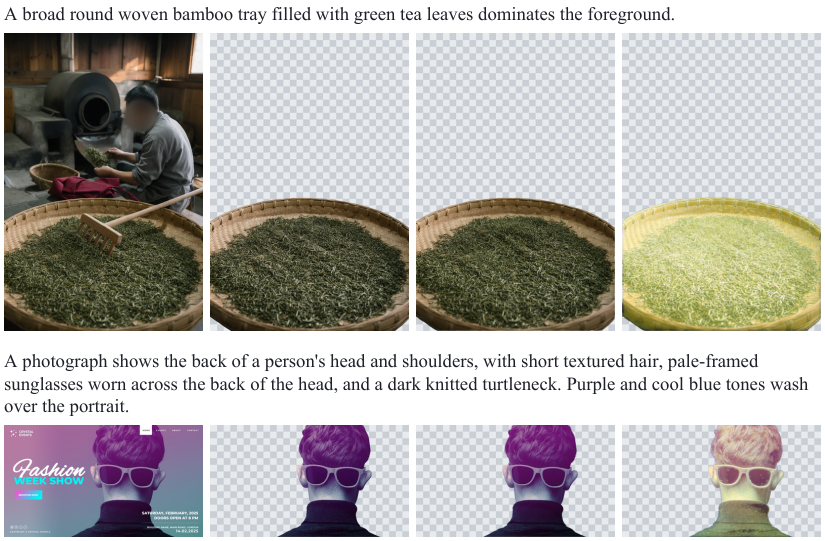}
\caption{Qualitative coupled-decoder ablation on FLUX.2 and Qwen-Image-Edit. \method preserves foreground colors more faithfully than the coupled decoder. Target descriptions use the shared extraction prefix in Figure~\ref{fig:i2rgba-additional}.}
\label{fig:ablation-coupled-qualitative}
\end{figure}

\clearpage
\subsection{Propagating Alpha Supervision to the DiT}
\label{app:qualitative-stopgrad}

Figure~\ref{fig:ablation-stopgrad-qualitative} compares \method with Stop-grad on Qwen-Image-Edit. The Stop-grad variant produces opacity artifacts around the price badge and wine bottle, omits the second line of lettering, and fills gaps between wheel spokes. The full model retains more accurate opacity structure in these cases.

\begin{figure}[!htbp]
\centering
\includegraphics[width=\AlphaFigureWidth]{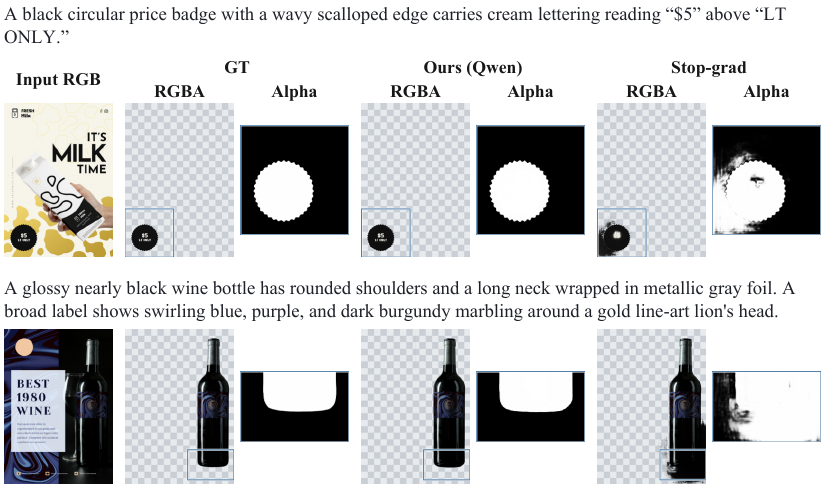}
\par
\includegraphics[width=\AlphaFigureWidth]{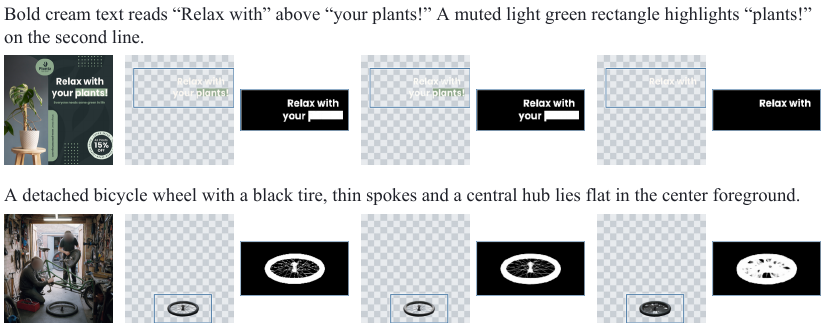}
\caption{Qualitative ablation of alpha supervision to the DiT on Qwen-Image-Edit. \method better preserves lettering and wheel spokes, with less stray opacity than Stop-grad. Target descriptions use the shared extraction prefix in Figure~\ref{fig:i2rgba-additional}.}
\label{fig:ablation-stopgrad-qualitative}
\end{figure}

\clearpage
\subsection{Endpoint-Aware Alpha Loss}
\label{app:qualitative-alpha-loss}

Figure~\ref{fig:ablation-alpha-loss-qualitative} compares EA Loss with standard alpha $L_1$ on Qwen-Image-Edit. The alpha-$L_1$ variant produces spurious opacity around the chocolate mold and craftsperson, and transparent holes inside the two opaque text panels. \method preserves the target opacity more closely in these examples.

\begin{figure}[!htbp]
\centering
\includegraphics[width=\AlphaFigureWidth]{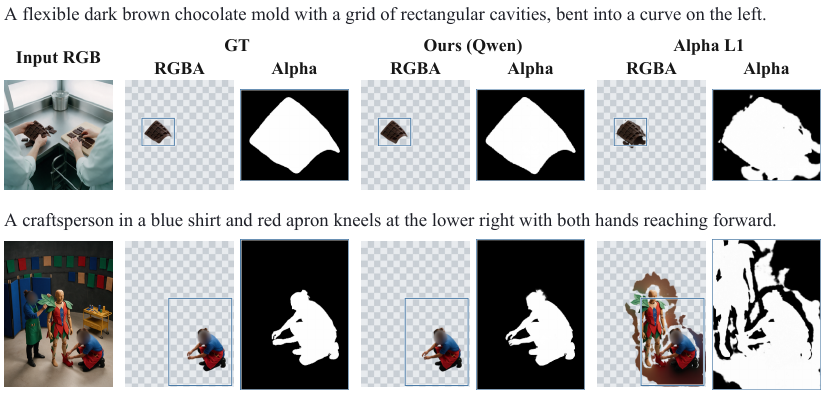}
\par
\includegraphics[width=\AlphaFigureWidth]{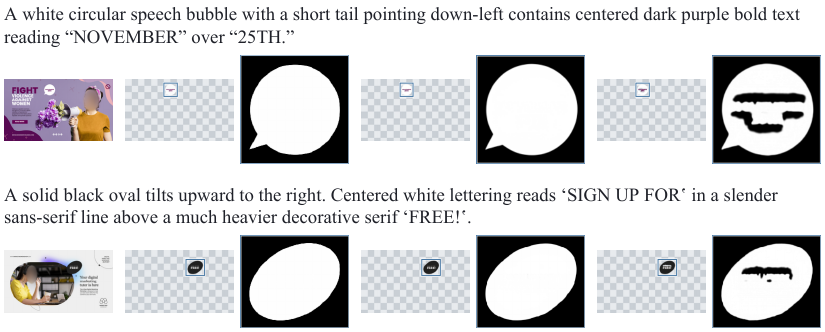}
\caption{Qualitative comparison of EA Loss and alpha $L_1$ on Qwen-Image-Edit. EA Loss reduces stray background opacity and transparent holes in opaque text regions. Target descriptions use the shared extraction prefix in Figure~\ref{fig:i2rgba-additional}.}
\label{fig:ablation-alpha-loss-qualitative}
\end{figure}

\clearpage
\subsection{Exploring RGBA Understanding Through Editing}
\label{app:rgba-understanding}

We view native RGBA image understanding as a potential advantage of jointly representing color and transparency in an RGBA VAE, while regarding high-quality RGBA generation as the more important capability. To explore this potential advantage, we conduct a qualitative editing experiment with Qwen-Image-2.1~\citep{qwen2026image21}. For each image layer, we compare the original RGBA input with the same layer composited over white and supplied with an all-one alpha channel. Under matched editing instructions and sampling settings, the paired outputs appear largely similar in overall appearance and opacity structure (Figure~\ref{fig:rgba-editing-input-comparison}). Future work could explore the distinctive value of RGBA understanding and the tasks in which it may offer practical benefits.

The six cases cover color modification, rotation, object removal, object addition, material modification, and style transfer. Each pair uses seed 42, 40 inference steps, a guidance scale of 1.0, and a $1024\times1024$ output canvas. We display the saved RGBA inputs and outputs on a common checkerboard.

\begin{figure}[!htbp]
\centering
\setlength{\parskip}{6pt}
\captionsetup{skip=5pt}
\includegraphics[width=\AlphaFigureWidth]{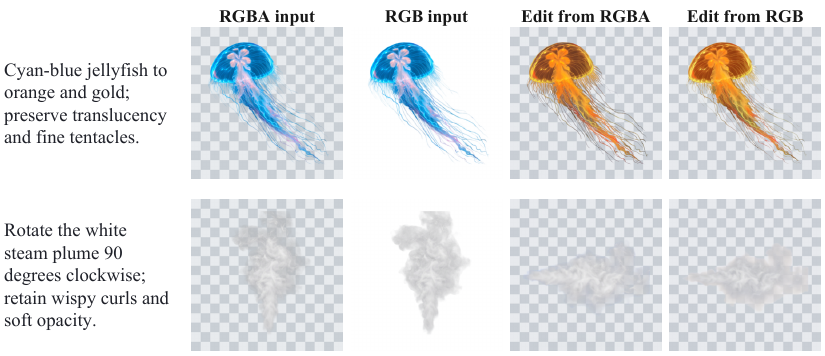}\par
\includegraphics[width=\AlphaFigureWidth]{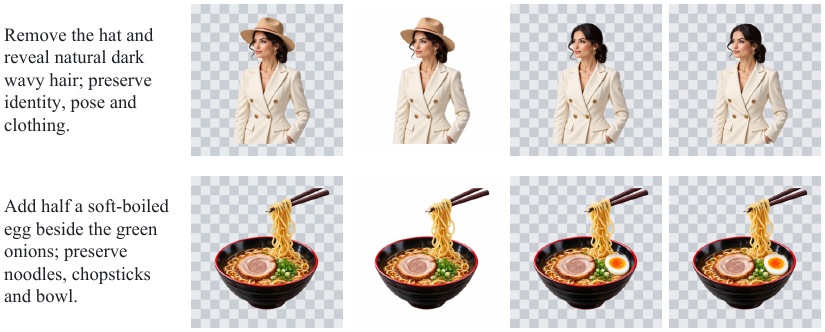}
\caption{Paired editing with Qwen-Image-2.1. The first two columns show the original RGBA input and its white-background composite with alpha set to one. The last two columns show the corresponding edited RGBA outputs. Each row uses the same prompt and sampling settings; the instruction is summarized on the left. RGBA images are displayed on a checkerboard. The two input conditions yield visually similar results in these examples.}
\label{fig:rgba-editing-input-comparison}
\end{figure}

\begin{figure}[!htbp]
\ContinuedFloat
\centering
\includegraphics[width=\AlphaFigureWidth]{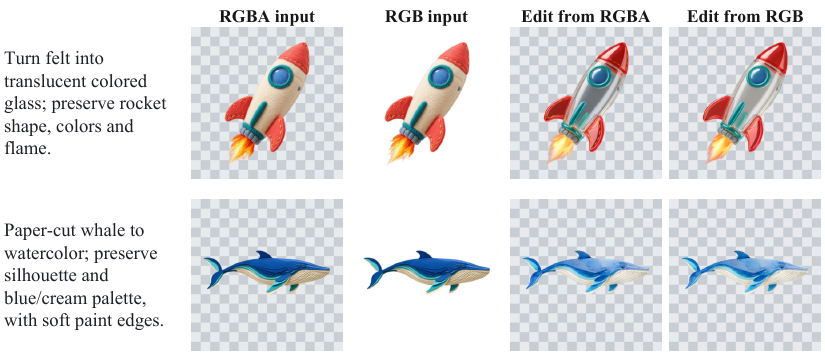}
\caption[]{Paired editing with Qwen-Image-2.1 (continued, cases 5--6 of 6).}
\end{figure}

\end{document}